\documentclass[twoside,11pt]{article}

\usepackage{blindtext}
\usepackage{graphicx}
\usepackage{authblk}
\usepackage{natbib}
\bibpunct{(}{)}{;}{a}{,}{,}

\usepackage[utf8]{inputenc}
\usepackage[T1]{fontenc}
\usepackage[hidelinks]{hyperref}
\usepackage{url}
\usepackage{booktabs}
\usepackage{amsfonts}
\usepackage{nicefrac}
\usepackage{microtype}
\usepackage{xcolor}

\usepackage{caption}
\usepackage{subcaption}

\usepackage{algorithm}
\usepackage{algpseudocode}
\usepackage{algorithmicx}

\usepackage{mathtools}
\usepackage{amsmath}
\usepackage{amssymb}
\usepackage{bm}
\usepackage{dsfont}
\usepackage[nameinlink,capitalize,noabbrev]{cleveref}

\newcommand{\BlackBox}{\rule{1.5ex}{1.5ex}}
\newenvironment{proof}{\par\noindent{\bf Proof\ }}{\hfill\BlackBox\\[2mm]}
 
\newtheorem{theorem}{Theorem}
\newtheorem{lemma}[theorem]{Lemma}

\newtheorem{definition}[theorem]{Definition}

\usepackage{gensymb} 

\usepackage{aligned-overset}

\graphicspath{ {figures/} }

\newcommand{\ie}{i.e.}
\newcommand{\eg}{e.g.}

\renewcommand{\vec}[1]{\bm{#1}} 
\newcommand{\mat}[1]{\bm{#1}} 
\newcommand{\set}[1]{\mathcal{#1}} 
\newcommand{\event}[1]{#1} 
\newcommand{\range}[1]{[#1]} 
\newcommand{\conv}{\circledast} 
\usepackage{arydshln} 

\newcommand{\cupdot}{\mathbin{\mathaccent\cdot\cup}}
\renewcommand{\uplus}{\cupdot}

\makeatletter
\providecommand*{\boxast}{%
  \mathbin{
    \mathpalette\@boxit{*}%
  }%
}
\newcommand*{\@boxit}[2]{%
  \sbox0{$\m@th#1\Box$}%
  \ifx#1\displaystyle \ht0=\dimexpr\ht0+.05ex\relax \fi
  \ifx#1\textstyle \ht0=\dimexpr\ht0+.05ex\relax \fi
  \ifx#1\scriptstyle \ht0=\dimexpr\ht0+.04ex\relax \fi
  \ifx#1\scriptscriptstyle \ht0=\dimexpr\ht0+.065ex\relax \fi
  \sbox2{$#1\vcenter{}$}
  \rlap{%
    \hbox to \wd0{%
      \hfill
      \raisebox{%
        \dimexpr.5\dimexpr\ht0+\dp0\relax-\ht2\relax
      }{$\m@th#1#2$}%
      \hfill
    }%
  }%
  \Box
}
\makeatother

\newcommand{\rconv}{\boxast} 

\newcommand{\R}{\mathbb{R}} 
\newcommand{\N}{\mathbb{N}} 
\newcommand{\B}{\mathbb{B}} 

\newcommand{\ind}[1]{\mathds{1}[#1]} 
\renewcommand{\Pr}{\mathbb{P}} 
\newcommand{\gw}{w} 

\newcommand{\Normal}{\mathcal{N}} 
\newcommand{\Unif}{\mathrm{Unif}} 

\DeclareMathOperator{\sign}{sign} 
\DeclarePairedDelimiterX{\inp}[2]{\langle}{\rangle}{#1, #2} 
\DeclarePairedDelimiterX{\dinp}[2]{\langle\langle}{\rangle\rangle}{#1, #2} 
\DeclarePairedDelimiter{\norm}{\lVert}{\rVert} 
\DeclarePairedDelimiter{\card}{\lvert}{\rvert} 
\DeclareMathOperator*{\argmin}{arg\,min} 
\DeclareMathOperator{\T}{T} 
\DeclareMathOperator{\OO}{\mathcal{O}} 
\DeclareMathOperator{\logOrd}{\widetilde{\mathcal{O}}} 
\DeclareMathOperator*{\E}{\mathbb{E}} 

\newcommand{\dist}{\mathrm{d}} 
\newcommand{\shift}{\mathrm{S}} 
\newcommand{\lift}[1]{{#1}^\uparrow} 
\DeclareMathOperator*{\GMP}{GMP} 

\newcommand{\phihat}{\hat{\varphi}}
\newcommand{\radius}{R_{\operatorname{rad}}}

\begin{document}

\title{\Large\bf Memorization Capacity of Invariant Convolutional Neural Nets}
\date{}
\author{Sjoerd Dirksen$^*$, Patrick Finke$^*$, and Paul Geuchen\thanks{Mathematical Institute, Utrecht University, The Netherlands (e-mails: \texttt{[\href{mailto:s.dirksen@uu.nl}{s.dirksen}, \href{mailto:p.geuchen@uu.nl}{p.geuchen}]@uu.nl}, \texttt{\href{mailto:patrick_finke@gmx.de}{patrick\_finke@gmx.de}})}}

\maketitle

\begin{abstract}%
    \noindent Neural networks are powerful memorizers which are often able to interpolate their training data even when the labels have been randomized. While there exists an extensive literature on the memorization capacity of fully connected networks, in practice more specific architectures with problem-dependent properties are preferred, such as convolutional networks for image classification. In this paper we consider convolutional networks which are invariant with respect to a given arbitrary finite group of orthogonal transformations, which may include translations, reflections, and/or rotations of images and invariances of higher order tensors. We prove upper bounds on the size of an invariant convolutional neural network with threshold activations that is required to interpolate a given dataset with two classes. Our bounds are formulated in terms of a complexity measure that takes the invariant structure into account and is instance-specific, \ie, it is related to the mutual arrangement of the classes in the given dataset and their geometric properties rather than the number of samples. Our proof is constructive: for given input data, instances of an interpolating network can be computed via a randomized algorithm with high probability. In numerical experiments we investigate the practical effectiveness of this algorithm.

    \noindent\textbf{Keywords:} memorization, interpolation, convolutional neural networks, invariant networks, high-dimensional geometry.
\end{abstract}

\section{Introduction}
\label{sec:introduction}

The \emph{memorization capacity} of a neural network architecture quantifies the number of parameters and neurons needed for the architecture to be able to interpolate \emph{any} $N$ data points for \emph{any} assignment of labels (for a suitable setting of the weights). The study of this capacity has a long history---dating back all the way to the 1960s---and its importance experienced a recent revival due to the \emph{double descent phenomenon} \citep{belkin2019}. According to the long-standing \emph{bias-variance trade-off} \citep{shalev2014understanding, hastie2009elements}, the complexity of a machine learning model needs to balance underfitting and overfitting to achieve good generalization performance. It should be rich enough to express the underlying patterns in the data but still be simple enough to avoid picking up any spurious patterns. This classical viewpoint is at odds with modern practice in deep learning, where highly over-parameterized neural networks are trained to achieve a (nearly) perfect fit on the training data \citep[see, e.g.,][]{canziani2016analysis,zhang2017understanding}. In spite of this apparent overfitting, however, these models perform well on test data. This behavior is described by the double descent curve, where increasing the models complexity beyond the \emph{interpolation threshold}---where the model is just complex enough to interpolate its training data---results in a second descent of the test error. This behavior is well documented for deep neural networks as well as a wide variety of other machine learning models, see, \eg, \cite{belkin2019,belkin2021, nakkiran2019deep,mei2020generalization, hastie2020surprises}. As was argued by \cite{dirksen2024}, in practical applications the model size is tailored to a given data set. Memorization capacity, however, studies the reverse situation, requiring a model of fixed size to interpolate \emph{any} dataset of $N$ samples. This results in bounds that necessarily grow with the number of samples. However, for a problem with a `nice' structure, such as linearly separable classes in the simplest case, this should certainly not be necessary.

Present works on memorization capacity typically only consider fully-connected neural networks. However, these are only rarely used in practice and more specific architectures are preferred. A prominent example are \emph{convolutional neural networks} for the processing of sequential or image data. Additionally, those models are frequently endowed with special problem-dependent properties such as \emph{translation-invariance}, \eg, through data augmentation in the training process or the use of global pooling operations, see, \eg, \citet{biscione2021,blything2021human}. This leads to the question:
\begin{center}
    \emph{What is the memorization capacity of a translation-invariant neural architecture?}
\end{center}
We recall that memorization capacity requires an architecture to be able to interpolate any $N$ samples for any assignment of labels. When two samples are the same up to translation, interpolating any label assignment simply becomes impossible. At the very least, one would need to identify these samples and assign them the same label. At the same time, instead of the total number of samples one should then consider only the number of unique samples up to translation, as any translated copies are correctly classified automatically.

In this paper, we remedy the above issues via an instance-specific approach which additionally respects the translation-invariant property of the model. Specifically, we build upon the prior work of \cite{dirksen2024} who derived an instance-specific result for fully-connected networks. Our main contributions are the following:
\begin{enumerate}
    \item \textbf{Memorization capacity of translation-invariant neural nets.} In Section~\ref{sec:translation-invariant-memorization}, given a binary classification data set, we prove upper bounds on the required number of parameters and neurons for a translation-invariant convolutional neural network to be able to interpolate the data. The bounds are linked to a new complexity measure that respects both the translation-invariant property of the model, as well as the mutual arrangement of the two classes and their geometric properties.
    \item \textbf{Randomized interpolation algorithm for convolutional neural nets.} Our proof is constructive: we introduce a randomized algorithm that, given a binary classification data set, with high probability, outputs a translation-invariant convolutional neural network which interpolates the data and has at most the asserted size. In numerical experiments, we investigate the effectiveness of our algorithm both in controlled experiments and on real image data, and compare its performance to previous work.
    \item \textbf{Beyond just translations.} In Appendix~\ref{sec:group-invariant-memorization}, we prove our results in a more general framework, which supports invariance with respect to a given, arbitrary finite group of orthogonal transformations. Apart from translations, this also includes, for example, reflections and rotations of image data, and invariances of higher order tensor data. We have chosen to present our results first in the special setting of translation-invariant neural nets to keep the exposition accessible.
\end{enumerate}
We develop our results for networks with threshold activations, as this is a challenging setting in memorization theory. We expect that our results can be extended to other activations that zero out negative inputs and assign positive values to positive inputs, such as the ReLU.

\paragraph{Notation.}

We denote scalars $c, C \in \R$ with plain letters. We denote vectors $\vec{x} \in \R^d$ with bold lowercase letters, and their $i$-th entries by $x_i$ or $[x]_i$. We denote matrices $\mat{A} \in \R^{m \times n}$ with bold uppercase letters, their $i$-th rows by $\vec{A}_i$ or $[\vec{A}]_i$, and their $(i, j)$-th entries by $A_{i, j}$ or $[A]_{i, j}$. $\B_2^d$ denotes the unit Euclidean ball in $\R^d$. For two disjoints sets $\set{A}$ and $\set{B}$ we denote their union by $\set{A} \uplus \set{B}$. For $n \in \N$, denote $\range{n} \coloneqq \{ 1, \ldots, n \}$. We write $A \gtrsim B$, or alternatively $B = \OO(A)$, if $C \cdot A \geq B$ for an absolute constant $C > 0$. When we do not only hide constants but logarithmic terms as well, we write $B = \logOrd(A)$ instead. We denote the \emph{threshold activation} and \emph{sign function} by
\begin{equation*}
    \sigma(t) = \begin{cases}
        1, & t > 0,\\
        0, & \text{else},
    \end{cases}
    \quad\text{and}\quad
    \sign(t) = \begin{cases}
        +1, & t > 0,\\
        -1, & \text{else},
    \end{cases}
\end{equation*}
respectively. For $\vec{a}, \vec{b} \in \R^d$, we define $\ind{\vec{a} = \vec{b}} \in \{0, 1\}^d$ element-wise by
\begin{equation*}
    (\ind{\vec{a} = \vec{b}})_i = \begin{cases}
        1, & a_i = b_i,\\
        0, & \text{else}.
    \end{cases}
\end{equation*}
Moreover, we define the \emph{global max pooling} operation by $\GMP(\vec{x}) = \max_{i \in \range{d}} x_i$. We denote by $\vec{0}, \vec{1} \in \R^d$ the vectors with entries all equal to $0$ and $1$, respectively. We denote the standard multivariate \emph{normal distribution} in $d$ dimensions by $\Normal(\vec{0}, \mat{I}_d)$ and the \emph{uniform distribution} on $\set{A} \subset \R^d$ by $\Unif(\set{A})$.

\subsection{Related work}
\label{sec:related-work-short}

There is a rich literature on the memorization capacity of neural network architectures, especially for fully-connected architectures. If a model is able to interpolate any $N$ samples in $\R^d$ with any assignment of labels, we say that it can \emph{memorize} $N$ samples. Memorization capacity quantifies how large a model needs to be in order to memorize $N$ samples.

\paragraph{Fully-connected networks.}

Earlier works on the memorization capacity of neural networks with threshold activations date back to the 1960s and studied shallow architectures, see, \eg, \citet{cover1965,nilsson1965learning,minsky1969introduction,baum1988capabilities,huang1990bounds}. More recent results consider deeper networks. In particular, \citet{vershynin2020memory} assumed samples on the unit sphere with at least a Euclidean distance of $\delta$ between them, and proved that a deep threshold (or ReLU) network can memorize $N$ samples with binary labels using $\logOrd(e^{1/\delta^2} + \sqrt{N})$ neurons and $\logOrd(e^{1/\delta^2} (d + \sqrt{N}) + N)$ parameters. Later, \citet{rajput2021exponential} removed the exponential dependence on $\delta$, showing that only $\logOrd(1/\delta + \sqrt{N})$ neurons and $\logOrd(d / \delta + N)$ parameters are sufficient while further only requiring bounded norm instead of unit norm. Interestingly, while the earlier results are deterministic constructions, \citet{vershynin2020memory} and \citet{rajput2021exponential} used a probabilistic approach. Recently, \citet{dirksen2024} presented an instance-specific approach to memorization capacity. For a given finite binary classification dataset, their bound on the required number of neurons and parameters depends not on the number of samples but is instead linked to the geometric properties of the data classes and their mutual arrangement. For benign data arrangements, their result can overcome the worst-case estimates of memorization capacity.

Among other activations, the ReLU stands out in particular due to its widespread use in practice. Some earlier works include the one of \citet{bubeck2020network} who extended the result of \citet{baum1988capabilities} to real-valued labels, and \citet{zhang2017understanding} who used weight-sharing. More recently, \citet{park2021provable} used deep ReLU networks to achieve the first result with a sublinear number of parameters. Assuming that the points are separated, they showed that a ReLU (or hard-Tanh) network can memorize multi-class labels using a constant number of neurons, $\logOrd(N^{2/3})$ layers and $\logOrd(d + N^{2/3})$ parameters. \citet{vardi2021optimal} improved the above dependence on $N$. For separated samples with bounded norm they showed that a ReLU network with a constant number of neurons, $\logOrd(\sqrt{N})$ layers and $\logOrd(d + \sqrt{N})$ parameters can memorize $N$ samples with multi-class labels. \citet{lee2024defining} derived bounds on the required size of an interpolating fully-connected ReLU network which they link to the polytope structure of a given data set and further connect to other manifold shapes.

For results with activations other than threshold or ReLU see, \eg, \citet{huang1998upper,Hua03,park2021provable,MaT23}. The works we cite above make assumptions on the samples such as general position, separation, or bounded norm. For results for randomized samples or samples drawn from a random distribution, see, \eg, \citet{GWZ19,Dan20,ZZH21}.

\paragraph{Convolutional neural networks.}

Convolutional neural networks \citep{lecun1989handwritten,lecun1995convolutional} are a cornerstone of modern machine learning. They have brought breakthroughs in the processing of images, video, speech, and audio \citep{goodfellow2016,lecun2015deep} and have been a driving force for the resurgence of deep learning \citep{krizhevsky2012imagenet}. However, in contrast to fully-connected networks, their memorization capacity does not yet seem to be very well studied.

A few empirical investigations have addressed this topic. Notably, \citet{zhang2017understanding} observed that popular convolutional architectures can easily interpolate a random labeling of their training data. \citet{arpit2017closer} expanded on this observation and argued in particular that the network's capacity depends on the considered data set. On an image data set with randomized labels, \citet{sablayrolles2018dejavu} gradually increased the number of training samples for convolutional networks of different sizes and observed at which point the training accuracy drops.

While the universality of convolutional neural networks (with max pooling) was shown by \citet{cohen2016convolutional}, to our knowledge the only existing result on their memorization capacity is due to \citet{nguyen2018optimization}. They showed that a convolutional neural network can memorize $N$ samples with real-valued labels, assuming that the patches of the samples are distinct and that the convolutional layer before the fully-connected output layer has output dimension at least $N$. However, their results require an activation function which is at least continuous, unlike the threshold activation which we use here. Moreover, their network size depends on the number of samples, which is not the case in our result.

As a natural extension to convolutional neural networks, \citet{cohen2016group} introduced group equivariant neural networks (G-CNNs). They developed group convolutions which generalize standard convolutions to be equivariant under various group transformations, such as translations, rotations, and reflections. There is a growing body of work on G-CNNs, see, e.g., \citet{cohen2019general} and the references therein.
In Appendix~\ref{sec:group-invariant-memorization} we use group convolutional layers to prove our results in a generalized setting.

\paragraph{Other architectures.}

There is research on the memorization capacity of other architectures, such as residual networks \citep{hardt2017identity}, fully-connected and convolutional autoencoders \citep{radhakrishnan2019memorization}, networks with conditional computation \citep{koyuncu2023memorization}, and, more recently, transformer models \citep{kajitsuka2025on,madden2024nexttoken,mahdavi2024memorization}.

\section{Translation-invariant memorization algorithm}
\label{sec:translation-invariant-memorization}

In this section, we present our results on translation-invariant convolutional neural networks. These follow from a more general framework where invariance under an arbitrary group of orthogonal transformations can be captured, including reflections and rotations of image data. We refer to Appendix~\ref{sec:group-invariant-memorization} for the technical details. For simplicity, we focus on the binary classification setting. Our method is extendable to multiple classes via a one-vs-many approach, see Appendix~\ref{sec:multiclass-algorithm} for this straightforward extension.

In the following, let $\set{X} = \set{X}^+ \uplus \set{X}^- \subset \R^{L \times H \times W}$ denote a given finite data set, where $\set{X}^+$ and $\set{X}^-$ are disjoint sets containing the samples with label $+1$ and $-1$, respectively. We assume that these are images which are $H$ pixels high, $W$ pixels wide, and have $L$ color channels. Our goal is the construction of a translation-invariant convolutional neural network which interpolates this data set.

To state this goal formally, denote by $\T_{h, w}$ the circular translation in pixel-space by $h$ pixels vertically and $w$ pixels horizontally, see the examples in Figure~\ref{fig:translation-invariant-mutual-covering} for an illustration. Formally, for a single color channel $\vec{z} \in \R^{H \times W}$,
\begin{equation*}
    \T_{h, w} \colon \R^{H \times W} \to \R^{H \times W},
    \quad \vec{z} \mapsto [z_{(h' - h \bmod H, w' - w \bmod W)}]_{(h', w') \in \range{H} \times \range{W}},
\end{equation*}
and we can use the lifting  
\begin{equation}
    \label{eq:lifted-translation}
    \lift{\T}_{h, w} \colon \R^{L\times H \times W} \to \R^{L\times H \times W},
    \quad (\vec{z}_{\ell})_{\ell\in[L]} \mapsto (\T_{h, w}(\vec{z}_{\ell}))_{\ell\in[L]}
\end{equation}
to translate all color channels at once.

\begin{definition}
    We say that $F \colon \R^{L \times H \times W} \to \{ \pm 1 \}$ is \emph{translation-invariant} if
    \begin{equation*}
        F(\lift{\T}_{h, w} \vec{x}) = F(\vec{x})
    \end{equation*}
    for all $\vec{x} \in \R^{L \times H \times W}$ and $(h, w) \in \range{H} \times \range{W}$.
\end{definition}

In their instance-specific result on memorization capacity for fully-connected threshold neural networks, \citet{dirksen2024} utilize a \emph{mutual covering} of Euclidean balls to quantify the difficulty of interpolating a given data set. When a model is translation-invariant it identifies any sample and its translated copies to one another. Hence, as a natural extension of the Euclidean distance to our setting, we define the pseudo-distance
\begin{equation*}
    \dist(\vec{x}, \vec{y})
    \coloneqq \min_{(h, w) \in \range{H} \times \range{W}} \norm{\vec{x} - \T_{h, w} \vec{y}}_2.
\end{equation*}
It is easy to see that this pseudo-distance is translation-invariant and has the property that $\dist(\vec{x}, \vec{y}) = 0$ if and only if $\vec{x}$ and $\vec{y}$ are translates of each other.  
For sets $\set{A}, \set{B} \subset \R^{L \times H \times W}$ and $\vec{a} \in 
\R^{L \times H \times W}$, we define
\begin{equation*}
    \dist(\vec{a}, \set{B}) \coloneqq \inf_{\vec{b} \in \set{B}} \dist(\vec{a}, \vec{b})
    \quad\text{and}\quad
    \dist(\set{A}, \set{B}) \coloneqq \inf_{\vec{a} \in \set{A}} \dist(\vec{a}, \set{B}).
\end{equation*}

\begin{figure}
    \includegraphics[width=0.6\linewidth]{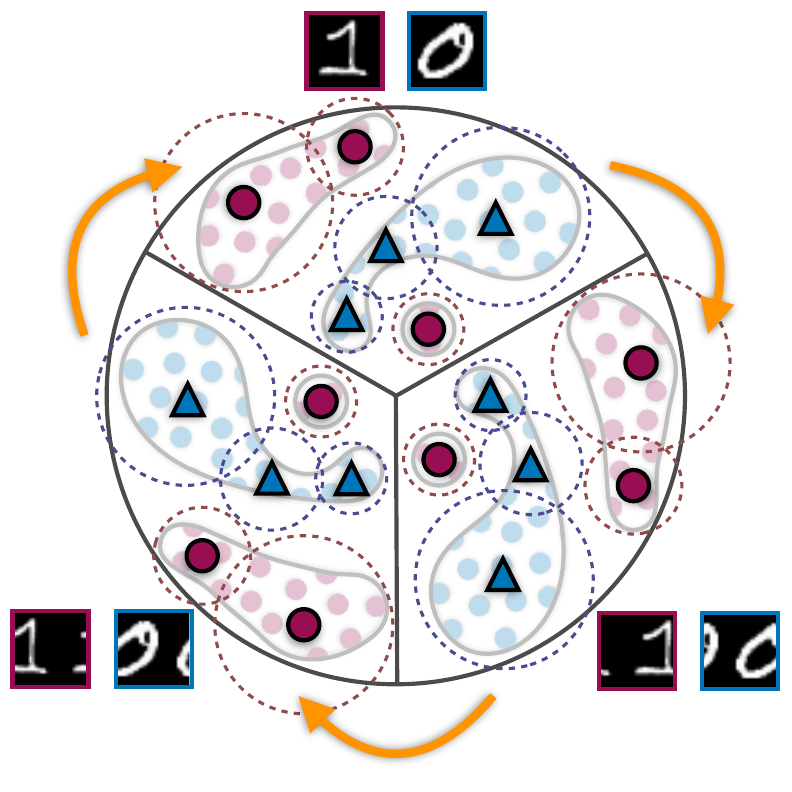}
    \centering
    \caption{\textbf{Translation-invariant mutual covering.} Following an arrow applies a horizontal circular shift. For simplicity, we only include three horizontal translations, in general the division would be more granular and include vertical translations. Condition~\eqref{eq:translation-invariant-radii} on the radii in Theorem~\ref{thm:size-of-translation-invariant-network} lets the components of the covering adapt to the mutual positioning of the classes by allowing them to grow (almost) as far as the optimal decision boundary. At the same time, each component consists of multiple sub-components, one for each possible translation. A component being a union of sub-components makes it translation-invariant.}
    \label{fig:translation-invariant-mutual-covering}
\end{figure}

Employing this pseudo-distance we define the following translation-invariant adaptation of the notion of mutual covering from \citet{dirksen2024}.

\begin{definition}[translation-invariant mutual covering]\label{def:translation-invariant-mutual-covering}
    Recall $\set{X} = \set{X}^+ \uplus \set{X}^-$. We call
    \begin{align*}
        \set{C}^+ = \{\vec{c}_1^+, \ldots, \vec{c}_{M^+}^+\} \subset \set{X}^+, &  \qquad r_1^+, \ldots, r_{M^+}^+ \geq 0,\\
        \set{C}^- = \{\vec{c}_1^-, \ldots, \vec{c}_{M^-}^-\} \subset \set{X}^-, &  \qquad r_1^-, \ldots, r_{M^-}^- \geq 0,
    \end{align*}
    a \emph{translation-invariant mutual covering} of $\set{X}$, if the \emph{components} of the covering,
    \begin{align*}
        \set{X}_k^+ & \coloneqq \{ \vec{x}^+ \in \set{X}^+ : \dist(\vec{c}_k^+, \vec{x}^+) \leq r_k^+ \} = \bigcup_{(h, w) \in \range{H} \times \range{W}}
        \set{X}_{k, (h, w)}^+, \\
        \set{X}_{k, (h, w)}^+ & \coloneqq \set{X}^+ \cap \B_2^d(\T_{h, w} \vec{c}_k^+, r_k^+),
    \end{align*}
    and
    \begin{align*}
        \set{X}_\ell^- & \coloneqq \{ \vec{x}^- \in \set{X}^- : \dist(\vec{c}_\ell^-, \vec{x}^-) \leq r_\ell^- \} = \bigcup_{(h, w) \in \range{H} \times \range{W}} \set{X}_{\ell, (h, w)}^-,\\
        \set{X}_{\ell, (h, w)}^- & \coloneqq \set{X}^- \cap \B_2^d(\T_{h, w} \vec{c}_\ell^-, r_\ell^-),
    \end{align*}
    cover $\set{X}^+$ and $\set{X}^-$, respectively. We call $M^+$ and $M^-$ the \emph{mutual covering numbers}.
\end{definition}

Figure~\ref{fig:translation-invariant-mutual-covering} illustrates the prototypical shape of a translation-invariant mutual covering. The condition~\eqref{eq:translation-invariant-radii} on the radii in Theorem~\ref{thm:size-of-translation-invariant-network} below allows the components to grow as far as the ideal decision boundary between the data classes---up to a logarithmic factor. Consequently, parts of the data far away from the decision boundary can be covered with only a few larger components while smaller components are only required in more complex areas close to the decision boundary. This data-adaptive behavior is similar to the covering of \citet{dirksen2024}. However, in contrast to their approach, our adapted distance measure allows every component to break up into multiple sub-components, each contained inside an Euclidean ball around a translated copy of the components center. In this way, one component not only groups together samples that have close proximity to one another but also those samples that are in close proximity \emph{up to translation}.

We now state the version of our theoretical result for translation-invariant convolutional neural networks, followed by a short proof sketch. The result for more general group invariances is proved later, in Appendix~\ref{sec:group-invariant-memorization-proof}. Below, we denote the \emph{Gaussian width}, for $\set{T} \subset \R^d$, by
\begin{equation*}
    \gw(\set{T}) = \E \sup_{\vec{t} \in \set{T}} \inp{\vec{g}}{\vec{t}},
\end{equation*}
where $\vec{g} \sim \Normal(\vec{0}, \mat{I}_d)$ denotes a standard Gaussian random vector. We refer to \citet[Section~7.5]{vershynin2018high} for a detailed discussion and examples.

\begin{theorem}[size of translation-invariant convolutional net]\label{thm:size-of-translation-invariant-network}
    Let $\set{X} = \set{X}^+ \uplus \set{X}^- \subset \radius \B_2^{L \times H \times W}$ be finite. Suppose there exists a translation-invariant mutual covering with centers $\set{C}^+$ and $\set{C}^-$ satisfying $\dist(\set{C}^+, \set{C}^-) \geq \delta$ and radii satisfying
    \begin{equation}\label{eq:translation-invariant-radii}
        r_k^+ \lesssim \frac{\dist(\vec{c}_k^+, \set{C}^-)}{\sqrt{\log(e \lambda / \dist(\vec{c}_k^+, \set{C}^-))}}
        \qquad\text{and}\qquad
        r_\ell^- \lesssim \frac{\dist(\vec{c}_\ell^-, \set{C}^+)}{\sqrt{\log(e \lambda / \dist(\vec{c}_\ell^-, \set{C}^+))}},
    \end{equation}
    for $k \in \range{M^+}$ and $\ell \in \range{M^-}$. Set $\omega \coloneqq \max\{ \omega^+, \omega^- \}$ where
    \begin{equation*}
        \omega^+ \coloneqq \max_{\substack{k \in \range{M^+} \\ (h, w) \in \range{H} \times \range{W}}} \frac{\gw^2(\set{X}^+_{k, (h, w)})}{\dist^3(\vec{c}_k^+, \set{C}^-)}
        \quad\text{and}\quad
        \omega^- \coloneqq \max_{\substack{\ell \in \range{M^-} \\ (h, w) \in \range{H} \times \range{W}}} \frac{\gw^2(\set{X}^-_{\ell, (h, w)})}{\dist^3(\vec{c}_\ell^-, \set{C}^+)}
    \end{equation*}
    Suppose $\lambda \gtrsim \radius$ and
    \begin{equation*}
        K \gtrsim \lambda \delta^{-1} \log(2 M^+ M^- H W / \eta) + \lambda \omega.
    \end{equation*}
    Then, with probability at least $1 - \eta$, Algorithm~\ref{alg:translation-invariant-memorization} outputs a translation-invariant convolutional neural network $F \colon \R^{L \times H \times W} \to \{ \pm 1 \}$ which interpolates $\set{X}$, that is, for at least a fraction of $\alpha$ samples $\vec{x}^+ \in \set{X}^+$ and all samples $\vec{x}^- \in \set{X}^-$,
    \begin{equation*}
        F(\vec{x}^+) = +1
        \quad\text{and}\quad
        F(\vec{x}^-) = -1.
    \end{equation*}
    Moreover, it uses $K$ filters in its first convolutional layer and no more than $M^+$ filters in its second convolutional layer. In particular, it has no more than $\OO\left( (L + M^+) H W K \right)$ parameters.
\end{theorem}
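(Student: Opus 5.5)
We follow the construction of \citet{dirksen2024} and make it equivariant by using convolutions plus global max pooling. We have not seen Algorithm~\ref{alg:translation-invariant-memorization}, so we assume it has the following structure.

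\emph{First layer.} $K$ random convolutional filters $\vec{w}_i \sim \Normal(\vec{0}, \mat{I})$ of full size $L \times H \times W$, applied circularly, with biases $b_i \sim \Unif([-\lambda, \lambda])$ and threshold activation. For an input $\vec{x}$ this computes the binary code $\sigma(\inp{\vec{w}_i}{\lift{\T}_{h,w}\vec{x}} + b_i)$ at every translation $(h,w)$. Circular convolution commutes with $\lift{\T}$, so the resulting feature map is translation-equivariant.

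\emph{Second layer.} Its $M^+$ filters are indexed by the positive centers $\vec{c}_k^+$. Filter $k$ tests whether the code of the current translate agrees with the code of $\vec{c}_k^+$ on all hyperplanes that separate $\vec{c}_k^+$ from every negative center. Concretely, it is a threshold neuron with weights $\pm 1$ on the relevant hyperplanes and bias equal to their number minus $\tfrac12$.

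\emph{Output.} Apply $\GMP$ over translations and channels, then $\sign$. Translation invariance of $F$ then follows immediately, since $\GMP$ of an equivariant map is invariant. The parameter count is $K\cdot LHW$ for the first layer plus $M^+\cdot K \cdot HW$ for the second, which gives $\OO((L+M^+)HWK)$.

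\emph{Probabilistic statement.} We need, with probability $1-\eta$, two properties.

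(i) For every pair $k,\ell$ and every translation $(h,w)$, some hyperplane $i$ separates $\vec{c}_k^+$ from $\T_{h,w}\vec{c}_\ell^-$ with margin. This makes every negative sample fail all second-layer filters at every translation.

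(ii) No positive in $\set{X}^+_{k,(h,w)}$ is separated from $\T_{h,w}\vec{c}_k^+$ by any hyperplane that is used for filter $k$. This makes each positive sample activate filter $k$ at the appropriate translation.

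For (i), note that $\norm{\vec{c}_k^+ - \T_{h,w}\vec{c}_\ell^-}_2 \ge \dist(\vec{c}_k^+,\vec{c}_\ell^-) \ge \delta$. A random hyperplane with Gaussian normal and uniform bias in $[-\lambda,\lambda]$ separates two points at distance $t$ with probability $\gtrsim t/\lambda$, using $\lambda \gtrsim \radius$. A union bound over the $M^+M^-HW$ pairs then requires $K \gtrsim \lambda\delta^{-1}\log(M^+M^-HW/\eta)$.

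For (ii), we reduce each sub-component to the Euclidean ball setting of \citet{dirksen2024}. We invoke their key lemma: with high probability, a random hyperplane that separates the center $\vec{c}$ from a point at distance $\gtrsim \dist(\vec{c},\set{C}^-)$ does not cut a set $\set{T}\subset \B_2(\vec{c},r)$ with $r$ obeying \eqref{eq:translation-invariant-radii}. Equivalently, the number of hyperplanes cutting $\set{T}$ is controlled by $\gw(\set{T})$ and $r$. The bias density is uniform, so the probability that a hyperplane cuts $\set{T}$ is about $(\gw(\set{T}) + r\sqrt{\log})/\lambda$, which can be bounded via Gaussian concentration of $\sup_{\vec{t}}\inp{\vec{g}}{\vec{t}-\vec{c}}$. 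The hyperplanes relevant to $k$ are those separating $\vec{c}_k^+$ from negatives. Conditional on separating, the chance of also cutting $\set{X}^+_{k,(h,w)}$ is small once $K \gtrsim \lambda\,\gw^2/\dist^3$; this is where $\omega$ enters.

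We then pass from ``most'' to ``a fraction $\alpha$'' of the positives, matching the statement. We do this with a Markov or expectation argument on the number of positives whose sub-component is cut. This avoids a union bound over all points, which is presumably why only an $\alpha$-fraction of positives is guaranteed while all negatives are.

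\emph{Main obstacle.} The hardest step is (ii): making the \citet{dirksen2024} single-ball estimate hold simultaneously for all $HW$ sub-components. The relevant hyperplane set for filter $k$ is defined relative to the untranslated center $\vec{c}_k^+$. Hence the condition for sub-component $(h,w)$ must be rewritten as a statement about $\T_{-h,-w}\set{X}^+_{k,(h,w)} \subset \B_2(\vec{c}_k^+, r_k^+)$, which holds because $\T$ is orthogonal. After this rewriting, the Gaussian width and the distances are unchanged, because $\dist$ and $\gw$ are invariant under orthogonal maps. I would verify this rewriting first, and then check that the union bound over translations only adds the $\log(HW)$ factor already present in the bound on $K$.
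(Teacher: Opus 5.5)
\textbf{The second-layer neuron is the wrong kind.} Your filter $k$ fires only if the input's code agrees with that of $\vec{c}_k^+$ on \emph{every} relevant hyperplane. So (ii) needs that no relevant hyperplane separates $\vec{c}_k^+$ from any point of $\lift{\T}_{-h,-w}\set{X}^+_{k,(h,w)}$.

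But one random hyperplane has a fixed probability $p>0$ of both doing this and separating $\vec{c}_k^+$ from a negative center. This holds for any sub-component with a point other than its center, in general position, and $p$ does not depend on $K$. The number of offending hyperplanes is therefore $\mathrm{Bin}(K,p)$. Hence (ii) fails with probability tending to one as $K\to\infty$, which is exactly the theorem's regime. For the same reason, ``conditional on separating, the chance of also cutting is small once $K\gtrsim\lambda\gw^2/\dist^3$'' cannot be right: that conditional probability does not involve $K$.

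The results of \citet{dirksen2024} are \emph{counting} statements, linear in $K$, not no-cut statements. Lemma~\ref{lem:fully-connected-layer-upper-bound} gives $\norm{\Psi(\vec{c}_m)-\Psi(\vec{x})}_1\lesssim s_m\lambda^{-1}K$ uniformly over the ball; the $\lambda\omega$ term pays for that uniformity. Lemma~\ref{lem:fully-connected-layer-lower-bound} gives $\inp{\Psi(\vec{x}^-)}{\vec{v}_{\vec{x}^+}}\gtrsim\norm{\vec{x}^--\vec{x}^+}_2\lambda^{-1}K$.

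The missing idea is a neuron that thresholds a \emph{count}. Set $\vec{u}=\ind{\Phi(\vec{x}_*^+)=\vec{0}}$. Then $\dinp{\Phi(\vec{x})}{\vec{u}}$ counts, minimised over shifts, the positions where $\vec{x}$ is active and $\vec{x}_*^+$ is not.
\begin{itemize}
\item For all $\vec{x}_*^+,\vec{x}^+$ in one component, routing through $\vec{c}_k^+$ with the triangle inequality gives $\lesssim s_k^+\lambda^{-1}KG$, where $G=HW$.
\item For negatives, the count is $\gtrsim\dist(\vec{c}_\ell^-,\vec{c}_k^+)\lambda^{-1}KG$ minus $\ell_1$ errors of the same type.
\end{itemize}
Both bounds scale with $K$, so a bias in between works for all large $K$; this is Lemma~\ref{lem:convolution-neuron-activation-region}. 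The group versions, Lemmas~\ref{lem:convolution-layer-lower-bound} and~\ref{lem:convolution-layer-upper-bound}, follow from the fully connected ones. They use $[\Phi(\vec{x})]_{(:,Q)}=\Psi(Q^\top\vec{x})$ and a union bound over $Q,R\in\set{G}$. Your back-translation idea is right for that step.

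\textbf{Further problems.} Your invariance argument and parameter count are fine, but three things remain.
\begin{itemize}
\item \emph{You analyse a different network.} The theorem concerns Algorithm~\ref{alg:translation-invariant-memorization}, which never sees the covering. Its filters come from randomly drawn data points $\vec{x}_*^+$ with bias $m_{\vec{x}_*^+}=\min_{\vec{x}^-\in\set{X}^-}\dinp{\Phi(\vec{x}^-)}{\vec{u}_{\vec{x}_*^+}}$. The bound of $M^+$ filters is a greedy induction: each new reference point lies in an uncovered component, and its activation region contains that whole component. This needs the key lemma for \emph{every} reference point of a component, not only its center.
\item \emph{Negatives are handled only at the centers.} Step (i) treats only the $\vec{c}_\ell^-$. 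A negative sample in $\set{X}^-_{\ell,(h,w)}$ need not lie beyond the hyperplane separating the centers, and $\omega^-$ never enters your proof. In the paper, negatives are rejected deterministically by the choice of $m_{\vec{x}_*^+}$. The $\omega^-$ condition lower-bounds this bias, via $\ell_1$-closeness of $\Phi(\vec{x}^-)$ to a shift of $\Phi(\vec{c}_\ell^-)$.
\item \emph{$\alpha$ is not a probabilistic loss.} On the high-probability event, the network after the second loop already interpolates all of $\set{X}^+$; $\alpha$ is only a user-chosen pruning threshold. The condition on $K$ does not involve $\alpha$, and $\alpha=1$ is allowed, so no Markov-type averaging can give the statement.
\end{itemize}
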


\begin{algorithm}
\caption{translation-invariant memorization}\label{alg:translation-invariant-memorization}
\begin{algorithmic}[1]
    \Require{$\set{X} = \set{X}^+ \uplus \set{X}^- \subset \R^{L \times H \times W}$ finite with $N^+ \coloneqq \card{\set{X}^+}$ and $N^- \coloneqq \card{\set{X}^-}$, number of random filters $K \in \N$, maximal bias $\lambda \geq 0$, interpolation fraction $\alpha \in (0, 1]$}
    \Ensure{translation-invariant convolutional neural net $F\colon \R^{L \times H \times W} \to \{ \pm 1 \}$}
    \For{$k \in \range{K}$}\label{alg-line:begin-first-layer}
        \State $\vec{g}_k \gets \text{independent sample from } \Normal(\vec{0}, \mat{I}_{L \times H \times W})$
        \State $\mathrlap{\tau_k}\hphantom{\vec{g}_k} \gets \text{independent sample from } \Unif([-\lambda, \lambda])$
    \EndFor\label{alg-line:end-first-layer} \Comment{$\Phi(\vec{x}) = [\sigma(\sum_{\ell \in \range{L}} \vec{x}_\ell \rconv [\vec{g}_k]_\ell + \tau_k)]_{k \in \range{K}}$}\vspace{1ex}
    \State $\set{A}^+ \gets \emptyset$ \label{alg-line:begin-second-layer}
    \State $\mathrlap{\set{C}^+}\hphantom{\set{A}^+} \gets \set{X}^+$
    \While{$\card{\set{C}^+} > 0$}
        \State $\vec{x}_*^+ \gets \text{random sample from } \set{C}^+$ \label{alg-line:select-candidate-sample}
        \State $\mathrlap{\set{C}^+}\hphantom{\vec{x}_*^+} \gets \set{C}^+ \setminus \{ \vec{x}_*^+ \}$
        \State $\mathrlap{\vec{u}_{\vec{x}_*^+}}\hphantom{m_{\vec{x}_*^+}} \gets \ind{\Phi(\vec{x}_*^+) = \vec{0}}$
        \State $m_{\vec{x}_*^+} \gets \min_{\vec{x}^- \in \set{X}^-} \dinp{\Phi(\vec{x}^-)}{\vec{u}_{\vec{x}_*^+}}$ \Comment{$\phihat_{\vec{x}_*^+}(\vec{z}) = \sigma(\sum_{k \in \range{K}} \vec{z}_k \rconv [- \vec{u}_{\vec{x}_*^+}]_k + m_{\vec{x}_*^+})$} \label{alg-line:neuron-definition}
        \State $\mathrlap{\set{V}_{\vec{x}_*^+}}\hphantom{m_{\vec{x}_*^+}} \gets \left\{ \vec{x}^+ \in \set{X}^+ : \GMP \left( \phihat_{\vec{x}_*^+}(\Phi(\vec{x}^+)) \right) > 0 \right\}$
        \If{$\card{\set{V}_{\vec{x}_*^+}} > 0$}
            \State $\set{A}^+ \gets \set{A}^+ \cup \{ \vec{x}_*^+ \}$
            \State $\mathrlap{\set{C}^+}\hphantom{\set{A}^+} \gets \set{C}^+ \setminus \set{V}_{\vec{x}_*^+}$
        \EndIf
    \EndWhile \label{alg-line:end-second-layer} \vspace{1ex}
    \While{$\card{\bigcup_{\vec{x}_*^+ \in \set{A}^+} \set{V}_{\vec{x}_*^+}} \geq \alpha N^+$} \label{alg-line:begin-pruning}
        \State $\mathrlap{\vec{x}_*^+}\hphantom{\set{A}^+} \gets \argmin_{\vec{x}_*^+ \in \set{A}^+} \card{ \set{V}_{\vec{x}_*^+} \setminus \bigcup_{\tilde{\vec{x}}_*^+ \in \set{A}^+ \setminus \{\vec{x}_*^+\}} \set{V}_{\tilde{\vec{x}}_*^+}}$
        \State $\set{A}^+ \gets \set{A}^+ \setminus \{ \vec{x}_*^+ \}$
    \EndWhile
    \State $\set{A}^+ \gets \set{A}^+ \cup \{ \vec{x}_*^+ \}$ \label{alg-line:end-pruning} \Comment{$\hat{\Phi}(\vec{z}) = [\sigma(\sum_{k \in \range{K}} \vec{z}_k \rconv [- \vec{u}_{\vec{x}_*^+}]_k + m_{\vec{x}_*^+})]_{\vec{x}_*^+ \in \set{A}^+}$}\vspace{1ex}
    \State \Return $F(\vec{x}) = \sign(\inp{\vec{1}}{(\lift{\GMP} \circ \hat{\Phi} \circ \Phi)(\vec{x})})$ \Comment{$\vec{1} = (1, \ldots, 1) \in \R^{\set{A}^+}$}
\end{algorithmic}
\end{algorithm}

\begin{proof}[Proof sketch]
    As the predictions of the model outside $\set{X}$ are not important for the interpolation task, we only explicitly encode $\set{X}^+$ with label $+1$ into the network and the complement is assigned label $-1$. This explains the asymmetry between the two classes in the result. In more detail, the algorithm proceeds as follows.
    
    In the first loop of the algorithm (lines~\ref{alg-line:begin-first-layer} to~\ref{alg-line:end-first-layer}), we sample a random convolution layer $\Phi$. Under our assumptions, we may condition on the high probability event of Lemma~\ref{lem:convolution-neuron-activation-region}---the key ingredient of the proof---under which the following holds. After transforming the data set with $\Phi$, for any sample $\vec{x}_*^+ \in \set{X}^+$, the neuron $\phihat_{\vec{x}_*^+}$ defined in line~\ref{alg-line:neuron-definition} (followed by a global max pooling operation) activates on the whole component of the mutual covering of $\set{X}^+$ that this sample resides in. On the other hand, it does not activate on any sample from $\set{X}^-$. Crucially, this holds for any sample within any of the components, which allows us to simply select candidate samples at random (line~\ref{alg-line:select-candidate-sample}). Hence, in the second loop (lines~\ref{alg-line:begin-second-layer} to~\ref{alg-line:end-second-layer}) we select a random candidate sample $\vec{x}_*^+ \in \set{C}^+$ and, after computing the activation region $\set{V}_{\vec{x}_*^+}$ of the neuron $\phihat_{\vec{x}_*^+}$, we remove the entire activation region from $\set{C}^+$. This eliminates at least one component of this mutual covering. Thus, in the next step, a candidate sample outside the just eliminated component is picked at random, eliminating another component by the same principle. By induction, the loop terminates after at most $M^+$ steps, producing a second convolutional layer with at most $M^+$ filters. At this point the network already fully interpolates $\set{X}$. To see this, notice that the activation regions $\set{V}_{\vec{x}_*^+}$ of the second layer neurons (together with the global max pooling operation) cover $\set{X}^+$. The last linear layer simply sums over the output of the global max pooling layer and outputs $+1$ if the input of the network is in any of the activation regions, and $-1$ otherwise. At this point we can already count the number of parameters.
    
    The third loop (lines~\ref{alg-line:begin-pruning} to~\ref{alg-line:end-pruning}) uses a heuristic rule to successively prune the neurons in the second layer which contribute the least to the interpolation of $\set{X}^+$. It stops exactly when pruning the next neuron would lead to interpolating less than a fraction of $\alpha$ samples. Notice that $\alpha$ is not reflected in the bound on the size of the network. Nevertheless, we observe its beneficial effects in the experiments, where it reduces the size of the network.
    
    The translation invariance is built into the architecture. The convolutional layers are translation-equivariant. Hence, when inputting a translated sample into the network, these layers simply pass on the translation. The global max pooling layer, on the other hand, is translation-invariant. Thus, any translation that was passed through the network until this point has no effect after this layer.
\end{proof}

\section{Experimental results}
\label{sec:experiments}

In this section, we present a series of experiments\footnote{The code is available at \url{https://github.com/patrickfinke/invariant-memo}. We ran the simulations on a RTX 5080 with 16GB of VRAM.} in which we evaluate the performance of our algorithm and compare it to the previous work of \citet{dirksen2024}. In the previous section, we presented our translation-invariant construction. To model translation-invariance with a stride greater than one or other groups of transformations altogether we use our generalized Algorithm~\ref{alg:group-invariant-memorization-multiclass} which we introduce in full detail in Appendix~\ref{sec:group-invariant-memorization}. For multi-class problems, we employ an extension which is similar to common one-vs-rest methods, see Appendix~\ref{sec:multiclass-algorithm}. To reproduce the fully-connected construction of \citet{dirksen2024}, we employ the generalized algorithm and simply choose the trivial group. In the following, the `fully-connected network' refers to this construction and the `convolutional network' refers to our invariant construction.

In Section~\ref{sec:experiments/2d}, we begin with a two-dimensional example. This provides some visual intuition on the structure of the networks that our method produces. In Section~\ref{sec:experiments/mnist-and-cifar}, we evaluate the performance of our algorithm on both MNIST \citep{lecun1998mnist} and CIFAR-10 \citep{krizhevsky2009learning}. In Section~\ref{sec:experiments/pruning}, this is followed up by examining the influence of the interpolation fraction hyperparameter $\alpha$. This yields some understanding of how well-conditioned MNIST and CIFAR-10 are with respect to our method. Up until this point, any invariance in the data originated from the distributions. Section~\ref{sec:experiments/invariant-data} features a more controlled setting, where we augment the data sets to be perfectly invariant by including all possible transformations of the samples.

\subsection{Illustrative two-dimensional example}
\label{sec:experiments/2d}

In order to gain some geometric intuition, let us briefly step away from the high-dimensional image setting in favor of a two-dimensional toy example. In such a setting, we can easily visualize the activation regions of the individual neurons just before the last linear layer. This offers a detailed look at the decision boundary of the network, as the union of these regions is exactly the region where the network outputs a positive label.

\begin{figure}[!tb]
    \null\hfill%
    \begin{subfigure}[b]{0.45\textwidth}
        \centering
        \includegraphics{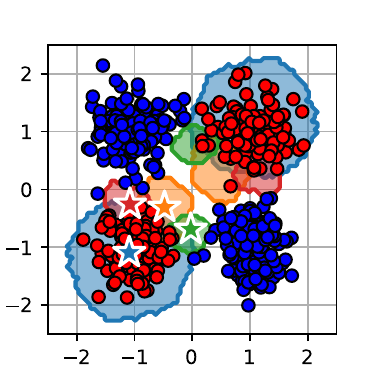}
        \caption{$\set{G} = \{ \mat{I}_2, -\mat{I}_2 \}$}
        \label{fig:experiments/2d/invariant}
    \end{subfigure}
    \hfill
    \begin{subfigure}[b]{0.45\textwidth}
        \centering
        \includegraphics{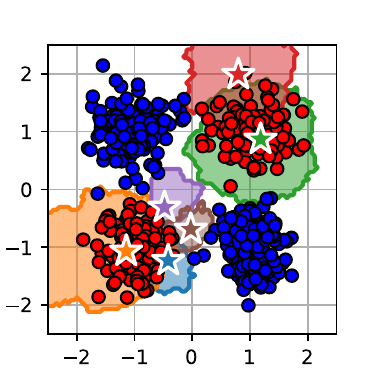}
        \caption{$\set{G} = \{ \mat{I}_2 \}$}
        \label{fig:experiments/2d/fully-connected}
    \end{subfigure}
    \null\hfill%
    \caption{\textbf{Activation regions on the XOR data set.} Red and blue points have labels $+1$ and $-1$, respectively. For each class we have drawn $250$ samples from a mixture of two Gaussian distributions with covariance matrices $0.35 \cdot \mat{I}_2$. The associated reference samples which the algorithm picked at random are marked with stars of the same color as its neuron's activation region. On the left, the network is invariant under $180\degree$ rotations, while on the right, there is no invariance.}
    \label{fig:experiments/2d}
\end{figure}

Figure~\ref{fig:experiments/2d} shows an instance of the XOR data set. Evidently, a classifier for this problem can benefit from a built-in invariance with respect to $180\degree$ rotations around the origin. In Appendix~\ref{sec:group-invariant-memorization}, we model a general invariance with respect to a finite group of orthogonal transformations. In Figure~\ref{fig:experiments/2d/invariant}, we choose $\set{G} = \{ \mat{I}_2, - \mat{I}_2 \}$ which represents the aforementioned rotational invariance. We see that the algorithm produces neurons with activation regions consisting of two parts which, together, reflect that invariance. We can compare this behavior to the fully-connected construction of \citet{dirksen2024}, which we recover via the trivial group $\set{G} = \{ \mat{I}_2 \}$, in Figure~\ref{fig:experiments/2d/fully-connected}. As this does not model the invariance of the problem, their method has to cover both quadrants of the positive class individually.

\subsection{Performance on real-world image data}
\label{sec:experiments/mnist-and-cifar}

In this section, we evaluate and compare the performance of our algorithm on the MNIST and CIFAR-10 data sets. Inspired by the experiments of \citet{dirksen2024}, we pick the maximal bias $\lambda = 0$ as this choice provided the best results in their experiments, among $\lambda \in [0, 3/2]$.\footnote{\citet{dirksen2024} normalized the $\ell_2$-radius of both MNIST and CIFAR-10 to $\radius = 1$, which we do here as well.} For $K$, we first choose values $K_{\text{ConvNet}}$ for the convolutional network in a large enough range such that the number of filters in the second layer converges. In a convolutional layer, depending on the data dimension and the stride of the convolution, each filter is shifted to $\hat{d}$ different positions. To produce a comparable result in which the first layers of both networks have the same output dimension, for the fully-connected network we choose $K_{\text{FC}} = K_{\text{ConvNet}} \cdot \hat{d}$.

A stride larger than one for the random convolution layer is modeled using our generalized theory from Appendix~\ref{sec:group-invariant-memorization}. This slightly weakens the translation-invariant property of the network. For example, for a stride of two, the network is only invariant to translations of two pixels at a time. However, there are benefits to this, \eg, the required number of parameters is lower, as we discuss below.

\paragraph{MNIST.}

\begin{figure}[!tb]
    \centering
    \includegraphics{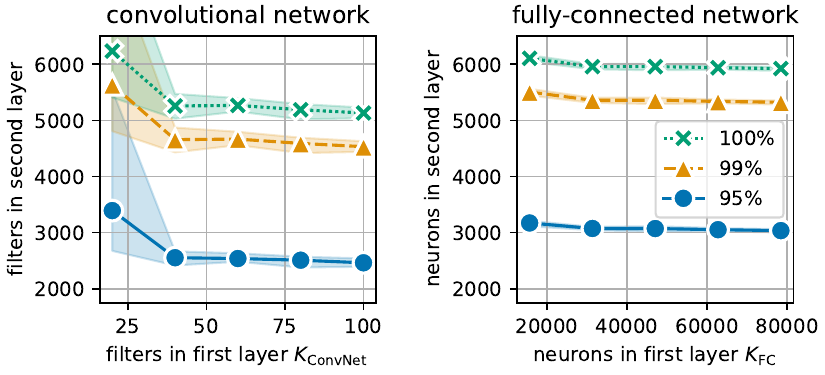}
    \caption{\textbf{MNIST: size of the second layer.} Depending on the number of filters (or neurons) in the first layer we report the number of filters (or neurons) in the second layer for three different levels of $\alpha$. We report the average over five runs with $95\%$ percentile intervals.}
    \label{fig:experiments/mnist/components}
\end{figure}

In this experiment, we pick a stride of one for the random convolution layer. On the left of Figure~\ref{fig:experiments/mnist/components}, we see that for the convolutional network increasing the number of filters $K_{\text{ConvNet}}$ in the random convolutional layer decreases the number of filters for the second layer, until there are diminishing returns at around $K_{\text{ConvNet}} = 60$. The fully-connected network behaves similarly, as shown on the right side of the same figure, where increasing the number of neurons $K_{\text{FC}}$ in the random fully-connected layer decreases the number of neurons in the second layer with diminishing returns somewhere around $K_{\text{FC}} = 30\,000$. Recall that for large $K$ the number of filters and neurons in the second layer, respectively, roughly corresponds to the mutual covering number. Therefore, the results indicate that the translation-invariant mutual covering has less components than its non-invariant counterpart, which means it captures the structure of the data set more concisely. Moreover, the convolutional network converges more efficiently, as both networks (due to weight sharing in the convolutional net) require $K \cdot d$ parameters in their first layers, where $d = 28 \cdot 28$ is the dimension of MNIST.

\begin{figure}[!tb]
    \centering
    \includegraphics{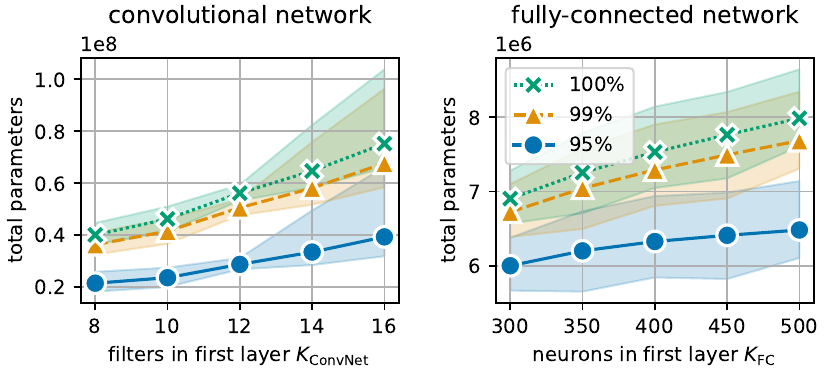}
    \caption{\textbf{MNIST: number of parameters.} Depending on the number of filters (or neurons) in the first layer we report the total number of parameters of each network for three different levels of $\alpha$. We report the average over ten runs with $95\%$ percentile intervals. In both cases, decreasing the size of the first layer by one more step leads to a failed interpolation attempt in at least one of the ten random runs.}
    \label{fig:experiments/mnist/parameters}
\end{figure}

In Figure~\ref{fig:experiments/mnist/parameters}, to reduce the number of total parameters, we see that for both networks it is beneficial to choose $K$ as small as possible (under the condition that the method still interpolates). From the algorithms we can deduce that the fully-connected network requires $K_{\text{FC}} + 1$ parameters per neuron (including the bias) in the second layer while the convolutional network requires $K_{\text{ConvNet}} \cdot d + 1$ parameters per filter in the second layer. Thus, in order for the convolutional network to require a lower total number of parameters, the lower possible value of $K_{\text{ConvNet}}$ and the corresponding lower number of filters in the second layer would need to undercut the influence of the factor $d$. However, as we can infer from the figures, this is not the case.

Finally, in all figures we can observe the influence of the interpolation fraction $\alpha$. We see that reducing $\alpha$ to $95\%$ cuts the sizes of the second layers roughly in half. We discuss this further in Section~\ref{sec:experiments/pruning}.

\paragraph{CIFAR-10.}

\begin{figure}[!tb]
    \centering
    \includegraphics{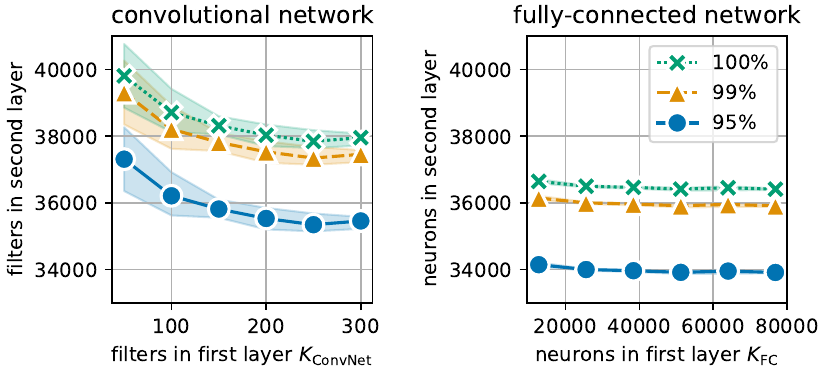}
    \caption{\textbf{CIFAR-10: size of the second layer.} Depending on the number of filters (or neurons) in the first layer we report the number of filters (or neurons) in the second layer for three different levels of $\alpha$. We report the average over five runs with $95\%$ percentile intervals.}
    \label{fig:experiments/cifar/components}
\end{figure}

In this experiment, we pick a stride of two for the random convolution layer. In Figure~\ref{fig:experiments/cifar/components} we again observe that an increase in $K$ leads to a decrease in the number of filters and neurons in the second layer, respectively. There are diminishing returns in both cases, around $K_{\text{ConvNet}} = 250$ and $K_{\text{FC}} = 40\,000$. As in the case of MNIST, both networks require $K \cdot d$ parameters in the first layer, where $d = 3 \cdot 32 \cdot 32$ is the dimension of CIFAR-10. Again, the second layer of the convolutional network converges more efficiently in this sense. However, the convolutional network requires slightly more second layer filters than the fully-connected network requires neurons in its second layer. Recall again that these roughly correspond to the mutual covering numbers. As we can see in Definition~\ref{def:translation-invariant-mutual-covering}, while the components for the convolutional network consist of a union of multiple sub-components, their individual radii can become smaller compared to the fully-connected counterpart. This interplay might have a non-trivial effect on the covering number and we suspect that exactly this can be observed here.

\begin{figure}[!tb]
    \centering
    \includegraphics{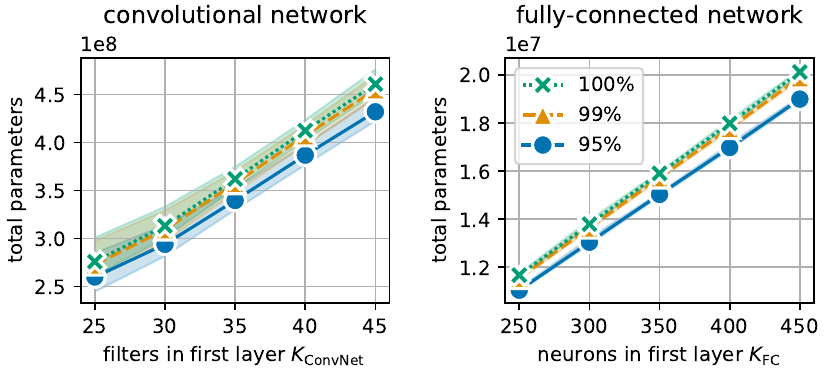}
    \caption{\textbf{CIFAR-10: number of parameters.} Depending on the number of filters (or neurons) in the first layer we report the number of total parameters of each network for three different levels of $\alpha$. We report the average over ten runs with $95\%$ percentile intervals. In both cases, decreasing the size of the first layer by one more step leads to a failed interpolation attempt in at least one of the ten random runs.}
    \label{fig:experiments/cifar/parameters}
\end{figure}

Due to the stride of two we only have $\hat{d} = 16 \cdot 16$. However, as in the case of MNIST, we still observe more parameters for the convolutional network in Figure~\ref{fig:experiments/cifar/parameters}.

Again, we can see that reducing $\alpha$ reduces the sizes of the second layers, but the effect is not as pronounced as for MNIST. We analyze this further in Section~\ref{sec:experiments/pruning} below.

\subsection{Estimating how well a data set is conditioned}
\label{sec:experiments/pruning}

The idea behind the mutual covering is that it can adapt to the structure of a given problem by covering many samples far away from the decision boundary with only a few large components, while requiring smaller components only in more challenging areas to cover the remaining samples. One could say that such a mutual covering is `good' while, \eg, a trivial covering where every sample is covered by its own dedicated component is certainly `bad'. In the same way, we might think of a data set as well-conditioned for our interpolation algorithm if it has a `good' covering and as ill-condition if it only has `bad' coverings.

Recall once again that, for large values of $K$, the number of filters in the second layer of the convolutional network and the number of neurons in the second layer of the fully-connected network roughly correspond to the respective covering number. Hence, observing how many filters or neurons are needed by our algorithm to interpolate a fraction $\alpha$ of samples can give an empirical hint of the size of the components and thus on how well-conditioned a data set is. In the most ill-conditioned case, our method will require one filter (respectively neuron) per sample, corresponding to a linear increase in terms of $\alpha$.

\begin{figure}[!tb]
    \null\hfill%
    \begin{subfigure}[b]{0.5\textwidth}
        \centering
        \includegraphics{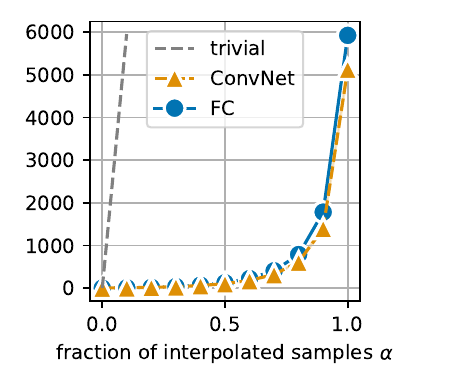}
        \caption{MNIST}
        \label{fig:experiments/pruning/mnist}
    \end{subfigure}%
    \begin{subfigure}[b]{0.5\textwidth}
        \centering
        \includegraphics{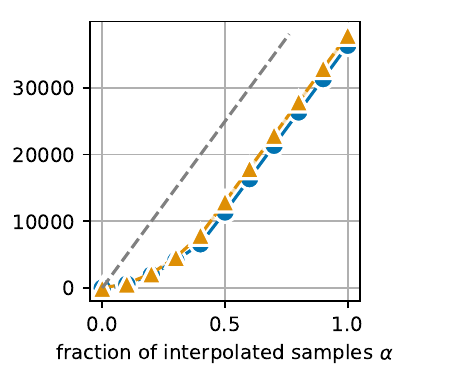}
        \caption{CIFAR-10}
        \label{fig:experiments/pruning/cifar}
    \end{subfigure}
    \null\hfill%
    \caption{\textbf{Influence of the interpolation fraction hyperparameter.} We report the number of filters and neurons in the second layer, respectively, in terms of the interpolation fraction $\alpha$. In each case, we picked the largest values of $K$ from Figures~\ref{fig:experiments/mnist/components} and~\ref{fig:experiments/cifar/components}. We report the average over five runs with $95\%$ percentile intervals, which are very narrow and thus hard to see.}
    \label{fig:experiments/pruning}
\end{figure}

Let us observe beforehand, in Figure~\ref{fig:experiments/pruning}, that both the convolutional network and the fully-connected network behave very similarly regarding the interpolation fraction $\alpha$. A likely explanation is that both MNIST and CIFAR-10 have been preprocessed such that its subjects are centered, hence there are no samples that are translated away from the origin. In this case, we cannot expect the convolutional network to be more efficient at interpolating these samples than the fully-connected network. Note, however, that the convolutional network is still more expressive as it extrapolates to any unseen translates of samples in the data set via the built-in translation-invariance.

In Figure~\ref{fig:experiments/pruning/mnist}, for MNIST we see a gradual increase in the required number of filters up until around $\alpha = 90\%$ at which point the behavior becomes trivial, where the incline matches that of a network which represents only one sample per neuron. On the other hand, in Figure~\ref{fig:experiments/pruning/cifar}, for CIFAR-10 the trivial behavior already starts at around $\alpha = 40\%$, hinting that CIFAR-10 is less well-conditioned than MNIST and harder to interpolate for our method.

\subsection{Invariant data sets}
\label{sec:experiments/invariant-data}

\begin{table}[]
    \centering
    \begin{tabular}{c|c|c|c}
        data set & $\set{G}$ & $\card{\set{G}}$ & $\overline{M}_{\text{FC}} / \overline{M}_{\text{ConvNet}}$ \\ \hline
        XOR & $\{ \mat{I}_2, -\mat{I}_2 \}$ & $2$ & $2.07$ \\
        MNIST & $\{ \lift{\T}_{0, 0}, \lift{\T}_{0, 7}, \lift{\T}_{0, 14}, \lift{\T}_{0, 21} \}$ & $4$ & $3.98$ \\
        CIFAR-10 & $\{ \lift{\T}_{0, 0}, \lift{\T}_{0, 8}, \lift{\T}_{0, 16}, \lift{\T}_{0, 24} \}$ & $4$ & $3.92$ \\ \cdashline{1-4}
        XOR$^*$ & $\{ \mat{I}_2, -\mat{I}_2 \}$ & $2$ & $1.61$ \\
    \end{tabular}\\
    \caption{\textbf{Difference in covering numbers on invariant data sets.} Ratio of the average number of neurons in the second layer of the fully-connected network $\overline{M}_{\text{FC}}$ and average number of filters in the second layer of the convolutional network $\overline{M}_{\text{ConvNet}}$ compared to the group size. In any case we use $K_{\text{ConvNet}} = 5\,000$ and $K_{\text{FC}} = \card{\set{G}} \cdot K_{\text{ConvNet}}$, similar to Section~\ref{sec:experiments/mnist-and-cifar}. For XOR and XOR$^*$ we used $\lambda = 2$ and did $50$ random trials, for MNIST and CIFAR-10 we used $\lambda = 0$ and did $5$ random trials. XOR$^*$ uses only the plain data set.}
    \label{tab:reduction}
\end{table}

In the previous subsections we observed that that the non-invariant covering number and our translation-invariant covering number (approximated by the number of neurons and filters in the second layer of the respective network) behave very similarly. We conjectured that this might be due to missing translation-invariance in the data sets due to preprocessing of the images.

What happens if there is enough invariance in the data set? Let $\set{G}$ be a finite group of orthogonal transformations, such as image-space translations in the case of MNIST and CIFAR-10 or rotations in the case of XOR. Each of the components of the $\set{G}$-invariant covering possess $\card{\set{G}}$ sub-components. If the data is invariant, these cover transformed copies of the same samples, all of which the non-invariant covering needs to cover, too. Hence, one might hope that the invariant covering number is by a factor $\card{\set{G}}$ smaller than its non-invariant counterpart. However, due to the adaptive nature of the component radii, this is not obvious and could be dataset-dependent.

To test this intuition, for a given data set $\set{X}$, we compare both methods on
\begin{equation*}
    \set{X}_{\set{G}}
    \coloneqq \{ Q \vec{x} : Q \in \set{G}, \vec{x} \in \set{X} \},
\end{equation*}
which contains all possible transformations of the original samples. Note that for our method with built-in invariance simply using $\set{X}$ is equivalent.

The results in \cref{tab:reduction} show that our initial intuition seems largely correct. The ratio of the average number of neurons in the second layer of the fully-connected network $\overline{M}_{\text{FC}}$ and the average number of filters in the second layer of the convolutional network $\overline{M}_{\text{ConvNet}}$ tracks $\card{\set{G}}$ closely over all tested data sets. Additionally, in the last row, we consider the plain XOR data set. Here, the invariance is already naturally encoded in the data distribution. We execute both methods on the unaugmented data set $\set{X}$ while still encoding $\set{G}$ into the convolutional model. We notice a clear improvement over the non-invariant covering, but it is not as large as when considering the `perfectly invariant' data set $\set{X}_\set{G}$. This makes sense, as the invariance is in the distribution only and hence `noisy'.

\section{Conclusion}
\label{sec:conclusion}

In this paper, we presented an instance-specific analysis of the memorization capacity of translation-invariant convolutional neural networks with threshold activations. For a data set of two classes, we derived bounds on the required number of neurons and parameters for such a network to be able to interpolate the data. These bounds are linked to a new notion of translation-invariant mutual covering which respects the translation-invariant property of the network and takes both the mutual arrangement of the two classes and their geometric properties into account.

We derived these results as a special case of a general framework, where we considered invariance with respect to an arbitrary finite group of orthogonal transformations. This includes multi-dimensional convolutions, reflections, and rotations. We gave a constructive proof by introducing a randomized algorithm which for given input data computes instances of such networks, with high probability. We conducted numerical experiments where we investigate the empirical performance of our algorithm and compare it to previous work. While the produced networks were not smaller than previous non-translation invariant networks they are more expressive due to their built-in invariance.

\section*{Use of AI tools}

The authors used AI solely to improve the wording and grammar of self-written prose. AI was not used for any other purpose, including research, proofs and other mathematics, figures, or code. A preliminary version of the results was announced in the PhD thesis of the second-named author \citep{finkeThesis}.

\section*{Acknowledgments and Disclosure of Funding}
PG is grateful for the hospitality of Utrecht University, where he worked on this project as a guest researcher. PG acknowledges support by the German Science Foundation (DFG) in the context of the Emmy Noether junior research group VO 2594/1--1.

\newpage

\appendix

\section{Generalized theory}
\label{sec:group-invariant-memorization}

In this section, we introduce our theoretical result in its full generality. To this end, we use a form of group equivariant convolution which was first introduced by \citet{cohen2016group}, although with a different notation. We reprove all the properties of this generalized convolution that are needed for our analysis.

In what follows, $\set{G} \leq \set{O}(d)$ denotes a finite subgroup of the orthogonal group. We aim to produce a model $F\colon \R^d \to \{\pm 1\}$ that is invariant under the transformations contained in $\set{G}$, that is, for all $\vec{x} \in \R^d$,
\begin{equation*}
    F(Q \vec{x}) = F(\vec{x})
    \quad \text{for all $Q \in \set{G}$}.
\end{equation*}

In the main part, we consider the special case of a translation-invariant convolutional neural network which is applied to 2D images with $L$ color channels, each of size $H \times W$. This corresponds to the choice
\begin{equation*}
    \set{G}
    = \set{G}_{\T}
    \coloneqq \{ \lift{\T}_{h, w} : (h, w) \in \range{H} \times \range{W} \},
\end{equation*}
where, as in the main part, $\T_{h, w}$ denotes the circular translation in pixel-space by $h$ pixels vertically and $w$ pixel horizontally, and the translation is applied for each color channel via the lifting $\lift{\T}_{h, w}$, defined in \eqref{eq:lifted-translation}. Clearly, these transformations are orthogonal as they are permutations, that is, a reordering of the elements of a vector. The same holds true for horizontal and vertical flips, $90\degree$ rotations for square 2D images or translations in higher dimensions. Over the course of this section, we highlight the special case of $\set{G}_{\T}$. The result from the main part follows from this discussion.

\subsection{Notation}
\label{sec:notation-extended}

We introduce an extended vector notation, which is similar to the notation used for tensors or multidimensional arrays. In the following, let $\set{I}$, $\set{J}$, and $\set{K}$ denote finite index sets.

We denote by $\R^\set{I}$ the space of vectors with entries in $\R$ that are indexed by the elements of $\set{I}$. Evidently, this space is isomorphic to $\R^{\card{\set{I}}}$. Let $\vec{x}, \vec{y} \in \R^\set{I}$. We denote by $x_i$ the $i$-th entry of the vector $\vec{x}$. We denote the $\ell_p$-norm, for $1\leq p<\infty$, and the Euclidean inner product by
\begin{equation*}
    \norm{\vec{x}}_p = \left( \sum_{i \in \set{I}} |x_i|^p \right)^{1/p}
    \quad\text{and}\quad
    \inp{\vec{x}}{\vec{y}} = \sum_{i \in \set{I}} x_i y_i,
\end{equation*}
respectively. In case of a double index, for $\vec{v} \in \R^{\set{I} \times \set{J}}$, we select subsets of entries via
\begin{equation*}
    \vec{v}_i \coloneqq \vec{v}_{(i, :)} \coloneqq [v_{(i, j)}]_{j \in \set{J}} \in \R^\set{J}
    \quad\text{and}\quad
    \vec{v}_{(:, j)} \coloneqq [v_{(i, j)}]_{i \in \set{I}} \in \R^\set{I},
\end{equation*}
where we fixed the first and second index, respectively. For $f \colon \R^\set{J} \to \R^\set{K}$ we define the \emph{lifting}
\begin{equation*}
    \lift{f} \colon \R^{\set{I} \times \set{J}} \to \R^{\set{I} \times \set{K}},
    \quad \vec{v} \mapsto [f(\vec{v}_i)]_{i \in \set{I}}
\end{equation*}
as the repeated application of $f$ over the first index. The set of indices $\set{I}$ over which we lift will in each case be clear from the context. Moreover, we slightly abuse the above notation by identifying $n \in \N$ with $\range{n} = \{1, \ldots, n\}$. Hence, we might write $\R^{n \times \set{I}} \coloneqq \R^{\range{n} \times \set{I}}$ for the space of vectors indexed by the elements of $\range{n} \times \set{I}$.

A double index will be helpful when working with convolution layers where we want to group entries that have been manipulated by the same filter. In general we do not need more than two indices, for example, any color channel of a 2D image with $H \times W$ pixels can be simply identified with a vector of dimension $d = H \cdot W$.

Let us conclude with a short example of the above notation. Let $K \in \N$ and $f \colon \R^\set{G} \to \R^\set{G}$. For $\vec{v}, \vec{u} \in \R^{K \times \set{G}}$, we may write
\begin{equation*}
    \inp{\vec{v}}{\lift{f}(\vec{u})}
    = \sum_{k \in \range{K}} \inp{\vec{v}_k}{[\lift{f}(\vec{u})]_k}
    = \sum_{k \in \range{K}} \inp{\vec{v}_k}{f(\vec{u}_k)},
\end{equation*}
where we first make the summation over $k \in \range{K}$ explicit and then resolve the lifting.

\subsection{Generalized memorization algorithm}
\label{sec:group-invariant-memorization-algorithm}

In this section, we introduce Algorithm~\ref{alg:group-invariant-memorization}, which---with high probability---produces a model that interpolates a given data set while being invariant under the transformations contained in $\set{G}$. We make some necessary definitions to state Algorithm~\ref{alg:group-invariant-memorization} and, along the way, we highlight how Algorithm~\ref{alg:group-invariant-memorization} reduces to Algorithm~\ref{alg:translation-invariant-memorization} in the special case of $\set{G}_{\T}$.

First, we generalize the notion of circular convolution.
\begin{definition}
    Let $\vec{x}, \vec{g} \in \R^d$. We call $\vec{x} \conv \vec{g} \in \R^\set{G}$ the \emph{circular convolution} with respect to $\set{G}$ and define it element-wise by
    \begin{equation*}
        [\vec{x} \conv \vec{g}]_Q
        = \inp{\vec{x}}{Q \vec{g}},
        \qquad Q \in \set{G}.
    \end{equation*}
\end{definition}

In the case of $\set{G}_{\T}$ the above corresponds to the regular 2D multi-channel circular convolution.\footnote{Strictly speaking, this results in the cross-correlation, which, in contrast to the convolution, is not commutative. To obtain the convolution one would need to additionally `flip' the kernel. In the literature, often both are referred to as convolution---as done by, \eg, \citet{goodfellow2016}---and moreover many deep learning software libraries only implement the cross-correlation. As the distribution of the random weights in our construction does not change when they are `flipped' both terminologies are equivalent in the first layer. For the second layer, one would indeed need to `flip' the weights.} To see this, we identify $(h, w) \in \range{H} \times \range{W}$ with $\lift{\T}_{h, w} \in \set{G}_{\T}$, so that $\R^{H \times W} \simeq \R^{\set{G}_{\T}}$. Then, for $\vec{x}, \vec{g} \in \R^{L \times H \times W}$ and each entry $(h, w) \in \range{H} \times \range{W}$,
\begin{equation*}
    [\vec{x} \conv \vec{g}]_{\lift{\T}_{h, w}}
    = \inp{\vec{x}}{\lift{\T}_{h, w} \vec{g}}\\
    \doteq  \sum_{\ell \in \range{L}} \inp{\vec{x}_\ell}{\T_{h, w} \vec{g}_\ell}\\
    = \left[ \sum_{\ell \in \range{L}} \vec{x}_\ell \rconv \vec{g}_\ell \right]_{(h, w)}.
\end{equation*}
Hence, we have $[\vec{x} \conv \vec{g}] \doteq \sum_{\ell \in \range{L}} \vec{x}_\ell \rconv \vec{g}_\ell$, which shows that the computation of the first layer in Algorithms~\ref{alg:translation-invariant-memorization} and~\ref{alg:group-invariant-memorization} is identical.

\begin{definition}\label{def:circular-shift-operator}
    Let $R \in \set{G}$ and $\vec{v} \in \R^\set{G}$. We define the \emph{circular shift operator} $\shift_R\colon \R^\set{G} \to \R^\set{G}$ induced by $R$ by
    \begin{equation*}
        [\shift_R \vec{v}]_Q
        = \vec{v}_{R^\top Q},
        \quad Q \in \set{G}.
    \end{equation*}
\end{definition}
Throughout we will use the elementary properties
\begin{align}\label{eqn:elemShiftProp}
    \shift_R\shift_U = \shift_{RU},
    \qquad \shift_R^\top = S_{R^\top},
    \qquad \|\shift_R \vec{v}\|_p = \|\vec{v}\|_p \qquad \text{for all } 1\leq p\leq\infty,    
\end{align}
for all $R,U\in \set{G}$ and $\vec{v}\in \R^{\set{G}}$. These properties clearly transfer to the lifting $\lift{\shift}_R$.  

The last piece of notation needed to state Algorithm~\ref{alg:group-invariant-memorization} is the following. For $\vec{v}, \vec{u} \in \R^{K \times \set{G}}$, we define the kernel
\begin{equation*}
    \dinp{\vec{v}}{\vec{u}}
    \coloneqq \min_{R \in \set{G}} \inp{\vec{v}}{\lift{\shift}_R \vec{u}}.
\end{equation*}

Let us conclude by showing how, in the case of $\set{G}_{\T}$, the computation of the second layer in Algorithm~\ref{alg:translation-invariant-memorization} and~\ref{alg:group-invariant-memorization} is identical. With the same identification as before, for each entry  $(h', w') \in \range{H} \times \range{W}$ and $\vec{v} \in \R^\set{G}$,
\begin{equation*}
    [\shift_{\lift{\T}_{h, w}} \vec{v}]_{\lift{\T}_{h', w'}}
    = \vec{v}_{\lift{\T}_{-h, -w} \lift{\T}_{h', w'}}
    = \vec{v}_{\lift{\T}_{h' - h, w' - w}}
    \doteq \vec{v}_{(h' - h, w' - w)}
    = [\T_{h, w} \vec{v}]_{(h', w')}.
\end{equation*}
Hence, we have $\shift_{\lift{\T}_{h, w}} \doteq \T_{h, w}$. Moreover, for $\vec{z}, \vec{u} \in \R^{K \times \set{G}_{\T}}$,
\begin{equation*}
    \inp{\vec{z}}{\lift{\shift}_{\lift{\T}_{h, w}} \vec{u}}
    = \sum_{k \in \range{K}} \inp{\vec{z}_k}{\shift_{\lift{\T}_{h, w}} \vec{u}_k}
    \doteq \sum_{k \in \range{K}} \inp{\vec{z}_k}{\T_{h, w} \vec{u}_k}
    = \left[ \sum_{k \in \range{K}} \vec{z}_k \rconv \vec{u}_k \right]_{(h, w)}
\end{equation*}
which further yields
\begin{equation*}
     - \dinp{\vec{z}}{\vec{u}}
     = \max_{\lift{\T}_{h, w} \in \set{G}_{\T}} \inp{\vec{z}}{\lift{\shift}_{\lift{\T}_{h, w}} (-\vec{u})}\\
     \doteq \max_{(h, w) \in \range{H} \times \range{W}} \left[ \sum_{k \in \range{K}} \vec{z}_k \rconv [- \vec{u}]_k \right]_{(h, w)}.
\end{equation*}
Hence, the neurons computed in Algorithm~\ref{alg:group-invariant-memorization} are
\begin{equation*}
    \phihat_{\vec{x}_*^+}(\vec{z})
    \doteq \GMP \left\{ \sigma \left( \sum_{k \in \range{K}} \vec{z}_k \rconv [- \vec{u}]_k + m_{\vec{x}_*^+} \right) \right\}.
\end{equation*}
Finally, notice that in the formulation of Algorithm~\ref{alg:translation-invariant-memorization} we detached $\GMP$ from the second layer in favor of a separate global max pooling layer.

\begin{algorithm}
\caption{$\set{G}$-invariant memorization}\label{alg:group-invariant-memorization}
\begin{algorithmic}[1]
    \Require{$\set{G} \leq \set{O}(d)$ finite, $\set{X} = \set{X}^+ \uplus \set{X}^- \subset \R^d$ finite with $N^+ \coloneqq \card{\set{X}^+}$ and $N^- \coloneqq \card{\set{X}^-}$, number of random filters $K \in \N$, maximal bias $\lambda \geq 0$, interpolation fraction $\alpha \in (0, 1]$}
    \Ensure{$\set{G}$-invariant (generalized) convolutional neural net $F\colon \R^d \to \{\pm 1\}$}
    \For{$k \in \range{K}$}
        \State $\vec{g}_k \gets \text{independent sample from } \Normal(\vec{0}, \mat{I}_d)$
        \State $\mathrlap{\tau_k}\hphantom{\vec{g}_k} \gets \text{independent sample from } \Unif([-\lambda, \lambda])$
    \EndFor \Comment{$\Phi(\vec{x}) = [\sigma(\vec{x} \conv \vec{g}_k + \tau_k)]_{k \in \range{K}}$}\vspace{1ex}
    \State $\set{A}^+ \gets \emptyset$
    \State $\mathrlap{\set{C}^+}\hphantom{\set{A}^+} \gets \set{X}^+$
    \While{$\card{\set{C}^+} > 0$}
        \State $\vec{x}_*^+ \gets \text{random sample from } \set{C}^+$
        \State $\mathrlap{\set{C}^+}\hphantom{\vec{x}_*^+} \gets \set{C}^+ \setminus \{ \vec{x}_*^+ \}$
        \State $\mathrlap{\vec{u}_{\vec{x}_*^+}}\hphantom{m_{\vec{x}_*^+}} \gets \ind{\Phi(\vec{x}_*^+) = \vec{0}}$
        \State $m_{\vec{x}_*^+} \gets \min_{\vec{x}^- \in \set{X}^-} \dinp{\Phi(\vec{x}^-)}{\vec{u}_{\vec{x}_*^+}}$ \Comment{$\phihat_{\vec{x}_*^+}(\vec{z}) = \sigma(m_{\vec{x}_*^+} - \dinp{\vec{z}}{\vec{u}_{\vec{x}_*^+}})$}
        \State $\mathrlap{\set{V}_{\vec{x}_*^+}}\hphantom{m_{\vec{x}_*^+}} \gets \{ \vec{x}^+ \in \set{X}^+ : \phihat_{\vec{x}_*^+}(\Phi(\vec{x}^+)) > 0 \}$
        \If{$\card{\set{V}_{\vec{x}_*^+}} > 0$} \label{line:ifVnonempty}
            \State $\set{A}^+ \gets \set{A}^+ \cup \{ \vec{x}_*^+ \}$
            \State $\mathrlap{\set{C}^+}\hphantom{\set{A}^+} \gets \set{C}^+ \setminus \set{V}_{\vec{x}_*^+}$
        \EndIf
    \EndWhile \vspace{1ex}
    \While{$\card{\bigcup_{\vec{x}_*^+ \in \set{A}^+} \set{V}_{\vec{x}_*^+}} \geq \alpha N^+$}
        \State $\mathrlap{\vec{x}_*^+}\hphantom{\set{A}^+} \gets \argmin_{\vec{x}_*^+ \in \set{A}^+} \card{ \set{V}_{\vec{x}_*^+} \setminus \bigcup_{\tilde{\vec{x}}_*^+ \in \set{A}^+ \setminus \{\vec{x}_*^+\}} \set{V}_{\tilde{\vec{x}}_*^+}}$ \label{line:pruneSelection}
        \State $\set{A}^+ \gets \set{A}^+ \setminus \{ \vec{x}_*^+ \}$
    \EndWhile
    \State $\set{A}^+ \gets \set{A}^+ \cup \{ \vec{x}_*^+ \}$ \label{line:correctExcessPruning} \Comment{$\hat{\Phi}(\vec{z}) = [\sigma(m_{\vec{x}_*^+} - \dinp{\vec{z}}{\vec{u}_{\vec{x}_*^+}})]_{\vec{x}_*^+ \in \set{A}^+}$}\vspace{1ex}
    \State \Return $F(\vec{x}) = \sign(\inp{\vec{1}}{\hat{\Phi}(\Phi(\vec{x}))})$ \Comment{$\vec{1} = (1, \ldots, 1) \in \R^{\set{A}^+}$}
\end{algorithmic}
\end{algorithm}

\subsection{Proof of the main result}
\label{sec:group-invariant-memorization-proof}

To state the generalization of Theorem~\ref{thm:size-of-translation-invariant-network}, for $\vec{x}, \vec{y} \in \R^d$, we define the pseudo-metric
\begin{equation*}
    \dist(\vec{x}, \vec{y})
    \coloneqq \min_{Q \in \set{G}} \norm{\vec{x} - Q \vec{y}}_2.
\end{equation*}
Again, we extend this notation as in the main part: for sets $\set{A}, \set{B} \subset \R^d$ and $\vec{a} \in \R^d$, we write
\begin{equation*}
    \dist(\vec{a}, \set{B}) \coloneqq \inf_{\vec{b} \in \set{B}} \dist(\vec{a}, \vec{b})
    \quad\text{and}\quad
    \dist(\set{A}, \set{B}) \coloneqq \inf_{\vec{a} \in \set{A}} \dist(\vec{a}, \set{B}).
\end{equation*}

The following definition generalizes the translation-invariant mutual covering from the main part to instead be invariant to the transformations contained in $\set{G}$. Notice that the case of $\set{G}_{\T}$ recovers Definition~\ref{def:translation-invariant-mutual-covering}.

\begin{definition}[$\set{G}$-invariant mutual covering]\label{def:group-invariant-mutual-covering}
    Recall that $\set{X} = \set{X}^+ \uplus \set{X}^-$. We call
    \begin{align*}
        \set{C}^+ = \{\vec{c}_1^+, \ldots, \vec{c}_{M^+}^+\} \subset \set{X}^+, &  \qquad r_1^+, \ldots, r_{M^+}^+ \geq 0,\\
        \set{C}^- = \{\vec{c}_1^-, \ldots, \vec{c}_{M^-}^-\} \subset \set{X}^-, &  \qquad r_1^-, \ldots, r_{M^-}^- \geq 0,
    \end{align*}
    a \emph{$\set{G}$-invariant mutual covering} of $\set{X}$, if the \emph{components} of the covering
    \begin{align*}
        \set{X}_k^+ & = \{ \vec{x}^+ \in \set{X}^+ : \dist(\vec{c}_k^+, \vec{x}^+) \leq r_k^+ \} = \bigcup_{Q \in \set{G}} \set{X}_{k, Q}^+, & \set{X}_{k, Q}^+ & \coloneqq \set{X}^+ \cap \B_2^d(Q \vec{c}_k^+, r_k^+),\\
        \set{X}_\ell^- & = \{ \vec{x}^- \in \set{X}^- : \dist(\vec{c}_\ell^-, \vec{x}^-) \leq r_\ell^- \} = \bigcup_{Q \in \set{G}} \set{X}_{\ell, Q}^-, & \set{X}_{\ell, Q}^- & \coloneqq \set{X}^- \cap \B_2^d(Q \vec{c}_\ell^-, r_\ell^-)
    \end{align*}
    cover $\set{X}^+$ and $\set{X}^-$, respectively. We call $M^+$ and $M^-$ the \emph{mutual covering numbers}.
\end{definition}

The following is the generalization of Theorem~\ref{thm:size-of-translation-invariant-network}. It states conditions under which Algorithm~\ref{alg:group-invariant-memorization} succeeds to produce an invariant network that interpolates the input data with high probability and, moreover, gives estimates on the size of this network.

\begin{theorem}[size of $\set{G}$-invariant net]\label{thm:size-of-group-invariant-network}
    Let $\set{G} \leq \set{O}(d)$ be finite with $G \coloneqq \card{\set{G}}$, let $\set{X} = \set{X}^+ \uplus \set{X}^- \subset \radius \B_2^d$ be finite. Suppose there exists a $\set{G}$-invariant mutual covering with centers satisfying $\dist(\set{C}^+, \set{C}^-) \geq \delta$ and radii satisfying
    \begin{equation*}
        r_k^+ \lesssim \frac{\dist(\vec{c}_k^+, \set{C}^-)}{\sqrt{\log(e \lambda / \dist(\vec{c}_k^+, \set{C}^-))}}
        \qquad\text{and}\qquad
        r_\ell^- \lesssim \frac{\dist(\vec{c}_\ell^-, \set{C}^+)}{\sqrt{\log(e \lambda / \dist(\vec{c}_\ell^-, \set{C}^+))}},
    \end{equation*}
    for $k \in \range{M^+}$ and $\ell \in \range{M^-}$. Set $\omega \coloneqq \max\{ \omega^+, \omega^- \}$ where
    \begin{equation*}
        \omega^+ = \max_{k \in \range{M^+}, Q \in \set{G}} \frac{w^2(\set{X}_{k, Q}^+)}{\dist^3(\vec{c}_k^+, \set{C}^-)}
        \qquad\text{and}\qquad
        \omega^- = \max_{\ell \in \range{M^-}, Q \in \set{G}} \frac{w^2(\set{X}_{\ell, Q}^-)}{\dist^3(\vec{c}_\ell^-, \set{C}^+)}.
    \end{equation*}
    Suppose $\lambda \gtrsim \radius$ and
    \begin{equation*}
        K \gtrsim \lambda \delta^{-1} \log(2 M^+ M^- G / \eta) + \lambda \omega.
    \end{equation*}
    Then, with probability at least $1 - \eta$, Algorithm~\ref{alg:group-invariant-memorization} outputs a neural network $F\colon \R^d \to \{\pm 1\}$ which is invariant under the transformations contained in $\set{G}$ and interpolates $\set{X}$, that is, for at least a fraction of $\alpha$ samples $\vec{x}^+ \in \set{X}^+$ and all samples $\vec{x}^- \in \set{X}^-$,
    \begin{equation*}
        F(\vec{x}^+) = +1
        \quad\text{and}\quad
        F(\vec{x}^-) = -1.
    \end{equation*}
    Moreover, it has $K$ and no more than $M^+$ convolutions in its first and second (generalized) convolutional layers, respectively. In particular, it has no more than $\OO(d K + M^+ G K)$ parameters.
\end{theorem}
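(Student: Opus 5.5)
The proof splits into a deterministic part (invariance and the counting of parameters) and a probabilistic part (interpolation). The probabilistic part rests on a key lemma, the $\set{G}$-analogue of the activation-region estimate of \citet{dirksen2024}, the one the proof sketch calls Lemma~\ref{lem:convolution-neuron-activation-region}. We establish it first.

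\textbf{Key lemma.} We want the following high-probability event. For every $k\in[M^+]$, every $\vec{x}_*^+\in\set{X}_k^+$, every $\vec{x}^+\in\set{X}_k^+$ and every $\vec{x}^-\in\set{X}^-$, there is some $R\in\set{G}$ with
\begin{equation*}
\inp{\Phi(\vec{x}^+)}{\lift{\shift}_R\vec{u}_{\vec{x}_*^+}} < \inp{\Phi(\vec{x}^-)}{\lift{\shift}_{R'}\vec{u}_{\vec{x}_*^+}} \quad\text{for all } R'\in\set{G}.
\end{equation*}
To get it, first note the equivariance identity $\Phi(Q\vec{x})=\lift{\shift}_Q\Phi(\vec{x})$, which holds because $[Q\vec{x}\conv\vec{g}]_U=\inp{\vec{x}}{Q^\top U\vec{g}}$. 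This identity lets us write
\begin{equation*}
\inp{\Phi(\vec{y})}{\lift{\shift}_R\vec{u}_{\vec{x}_*}}=\sum_{k\le K}\sum_{U\in\set{G}}\ind{\inp{\vec{y}}{RU\vec{g}_k}+\tau_k>0}\,\ind{\inp{\vec{x}_*}{U\vec{g}_k}+\tau_k\le 0}.
\end{equation*}
This counts the random hyperplanes (rotated by $U$) that separate $R^\top\vec{y}$ from $\vec{x}_*$ in the sense ``$\vec{x}_*$ off, $\vec{y}$ on.'' For each fixed $U$, the pair $(U\vec{g}_k,\tau_k)$ has the same law as $(\vec{g}_k,\tau_k)$. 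So for each fixed $U$ the summands are i.i.d.\ Bernoulli over $k$, and their mean is controlled by the fully connected estimates: it is $\gtrsim \|\vec{x}_*-R^\top\vec{y}\|_2/\lambda$ for separated points, using $\lambda\gtrsim\radius$. The sum over $U$ only multiplies means by $G$. Concentration is done separately for each $U$ and combined by a union bound.

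For a positive pair we choose $R$ so that $R^\top\vec{x}^+$ and $\vec{x}_*^+$ lie in the same sub-component ball around $Q\vec{c}_k^+$, up to composition of group elements. Then we invoke the uniform small-ball bound of \citet{dirksen2024}, stated in terms of the Gaussian width $\gw(\set{X}^+_{k,Q})$ and the radius condition. It gives that few hyperplanes separate points within one ball, at most about $c\,K\dist(\vec{c}_k^+,\set{C}^-)/\lambda$. For negative pairs, every $\vec{x}^-$ lies in some $\set{X}^-_{\ell,Q'}$, and $\dist(\vec{c}_k^+,\vec{c}_\ell^-)\ge\dist(\vec{c}_k^+,\set{C}^-)$. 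Applying the same tessellation estimates with the center-to-center separation, plus the radius conditions on $r_\ell^-$, gives at least $C\,K\dist(\vec{c}_k^+,\set{C}^-)/\lambda$ separating hyperplanes for every $R'$, with $C>c$. The requirement $K\gtrsim\lambda\omega$ makes the width term lower order. The term $\lambda\delta^{-1}\log(2M^+M^-G/\eta)$ pays for the union bound over pairs of components and over $R,R'\in\set{G}$.

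This lemma is the main obstacle. It requires transferring the single-ball tessellation bounds of \citet{dirksen2024} to the shifted inner products while uniformly controlling all $G$ shifts without losing more than $\log G$. I would first try reducing each shift to a fixed fully connected problem via the distributional invariance noted above, and then union bound.

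\textbf{Consequences on the event.} The lemma gives $\dinp{\Phi(\vec{x}^+)}{\vec{u}}<m_{\vec{x}_*^+}$, so $\phihat_{\vec{x}_*^+}$ fires on all of $\set{X}_k^+\supseteq$ the component of $\vec{x}_*^+$. By the definition of $m_{\vec{x}_*^+}$ as a minimum over $\set{X}^-$ (and the strict threshold), the neuron never fires on $\set{X}^-$. Each accepted candidate therefore removes an entire component from $\set{C}^+$, so the second loop terminates with $|\set{A}^+|\le M^+$. The union of the sets $\set{V}$ covers $\set{X}^+$, and no negative sample activates any neuron, giving $F(\vec{x}^-)=\sign(0)=-1$.

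\textbf{Pruning.} Pruning only removes neurons, and line~\ref{line:correctExcessPruning} restores the last removed one. It thus preserves coverage of at least $\alpha N^+$ positives and cannot create positives on $\set{X}^-$.

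\textbf{Invariance and size.} Invariance follows from the equivariance identity. Indeed, $\dinp{\lift{\shift}_Q\vec{v}}{\vec{u}}=\min_R\inp{\vec{v}}{\lift{\shift}_{Q^\top R}\vec{u}}=\dinp{\vec{v}}{\vec{u}}$, so $F(Q\vec{x})=F(\vec{x})$. Counting parameters gives $dK+K$ for the first layer and $(KG+1)$ per second-layer filter, for a total of $\OO(dK+M^+GK)$. For $\set{G}_{\T}$ this reduces to Theorem~\ref{thm:size-of-translation-invariant-network}.
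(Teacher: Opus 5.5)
Your proposal is correct and follows essentially the paper's route: your per-group-element distributional-invariance reduction is the identity $[\Phi(\vec{x})]_{(:,Q)}=\Psi(Q^\top\vec{x})$. The paper likewise applies Lemmas~\ref{lem:fully-connected-layer-lower-bound} and~\ref{lem:fully-connected-layer-upper-bound} to rotated center sets and rotated sub-components with a union bound over $\set{G}^2$, passes through the centers by triangle/H\"older inequalities (your hyperplane-count triangle inequalities), and then runs the same termination, pruning, invariance and counting arguments. One point to make explicit in the negative-pair step: the error from moving $\vec{x}^-$ to its center is of order $\dist(\vec{c}_\ell^-,\set{C}^+)\lambda^{-1}KG$. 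It is absorbed only because $\dist(\vec{c}_\ell^-,\set{C}^+)\le\dist(\vec{c}_\ell^-,\vec{c}_k^+)$, so keep the main term as the center-to-center distance and pass to $\dist(\vec{c}_k^+,\set{C}^-)$ only at the end.
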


In the remainder of this section, we prove Theorem~\ref{thm:size-of-group-invariant-network}. First, we make some simple but helpful observations. Then, we prove lower and upper bounds on quantities of interest in Lemma~\ref{lem:convolution-layer-lower-bound} and~\ref{lem:convolution-layer-upper-bound}, respectively. We connect these bounds in Lemma~\ref{lem:convolution-neuron-activation-region} which serves as the key ingredient of our proof of Theorem~\ref{thm:size-of-group-invariant-network}, stated at the very end of this section.

Let us formally define the first layer, which is constructed within the first loop of Algorithm~\ref{alg:group-invariant-memorization}. For the \emph{number of filters} $K \in \N$ and the \emph{maximal bias} $\lambda \geq 0$, let
\begin{equation*}
    \vec{g}_1, \ldots, \vec{g}_K \sim \Normal(\vec{0}, \mat{I}_d)
    \quad\text{and}\quad
    \tau_1, \ldots, \tau_K \sim \Unif([-\lambda, \lambda])
\end{equation*}
be all independent of each other. We define the \emph{random convolution layer} by
\begin{equation*}
    \Phi(\vec{x}) = [\sigma(\vec{x} \conv \vec{g}_k + \tau_k)]_{k \in \range{K}} \in \R^{K \times \set{G}},
\end{equation*}
where the addition of the biases $\tau_k$ is element-wise.

The proofs of Lemmas~\ref{lem:convolution-layer-lower-bound} and~\ref{lem:convolution-layer-upper-bound} below relate our more general setting to a setting with a fully-connected first layer. We define the \emph{random fully-connected layer $\Psi$ associated to $\Phi$} by
\begin{equation*}
    \Psi(\vec{x}) = \sigma(\mat{G} \vec{x} + \vec{\tau}),
\end{equation*}
where $\mat{G}$ has rows $\vec{g}_1, \ldots, \vec{g}_K$ and $\vec{\tau} = (\tau_1, \ldots, \tau_K)$.

The following lemma states two helpful observations which we use throughout the following proofs. The first is that transforming an input of the convolution layer is the same as shifting its output. The second is that only observing the output of the convolution layer for a fixed transformation is the same as applying the associated fully-connected layer to an appropriately transformed input.

\begin{lemma}\label{lem:convolution-layer-properties}
    For any $Q \in \set{G}$ and $\vec{x} \in \R^d$,
    \begin{equation*}
        \Phi(Q \vec{x}) = \lift{\shift}_Q [\Phi(\vec{x})]
        \quad\text{and}\quad
        [\Phi(\vec{x})]_{(:, Q)} = \Psi(Q^\top \vec{x}).
    \end{equation*}
\end{lemma}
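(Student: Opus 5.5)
Both identities follow by writing out a single entry $(k,R)$, $k\in\range{K}$, $R\in\set{G}$, of each side. The only facts needed are the definition of the circular convolution, the definitions of $\shift_R$ and of its lifting, and orthogonality of the elements of $\set{G}$, i.e.\ $Q^\top = Q^{-1}$ and $\inp{Q\vec{a}}{\vec{b}} = \inp{\vec{a}}{Q^\top\vec{b}}$. The threshold $\sigma$ and the biases $\tau_k$ act entrywise and do not depend on the group index. It therefore suffices to verify each identity at the level of the pre-activations $[\vec{x}\conv\vec{g}_k]_R$.

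For the first identity, the entry of the left-hand side is
\begin{equation*}
    [\Phi(Q\vec{x})]_{(k,R)} = \sigma\bigl(\inp{Q\vec{x}}{R\vec{g}_k} + \tau_k\bigr) = \sigma\bigl(\inp{\vec{x}}{Q^\top R\vec{g}_k} + \tau_k\bigr) = [\Phi(\vec{x})]_{(k, Q^\top R)}.
\end{equation*}
By Definition~\ref{def:circular-shift-operator} and the definition of the lifting, the right-hand side has entry $[\lift{\shift}_Q \Phi(\vec{x})]_{(k,R)} = [\shift_Q \Phi(\vec{x})_k]_R = [\Phi(\vec{x})]_{(k,Q^\top R)}$. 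The two sides agree entry by entry. This also uses that $Q^\top R \in \set{G}$, which holds because $\set{G}$ is a group.

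For the second identity, fix $Q$ and read off the column $(:,Q)$. Its $k$-th entry is
\begin{equation*}
    \sigma\bigl(\inp{\vec{x}}{Q\vec{g}_k} + \tau_k\bigr) = \sigma\bigl(\inp{\vec{g}_k}{Q^\top\vec{x}} + \tau_k\bigr).
\end{equation*}
Since the rows of $\mat{G}$ are the $\vec{g}_k$, this is exactly the $k$-th entry of $\sigma(\mat{G}Q^\top\vec{x} + \vec{\tau}) = \Psi(Q^\top\vec{x})$.

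There is no real obstacle here. The one point needing care is the index convention in $\shift_R$, namely $R^\top Q$ rather than $QR^\top$ or $RQ$. I would check it explicitly against the computation above, since the order matters for non-abelian $\set{G}$ such as translations combined with flips.
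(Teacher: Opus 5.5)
Your proposal is correct and follows essentially the same route as the paper: the paper first checks the pre-activation identity $(Q\vec{x})\conv\vec{g} = \shift_Q(\vec{x}\conv\vec{g})$ entrywise and then passes $\sigma$ and $\tau_k$ through, while you do this in one step at entry $(k,R)$. Your second identity is the same one-line orthogonality computation the paper uses, and your index check $[\shift_Q \vec{v}]_R = \vec{v}_{Q^\top R}$ matches the paper's convention.
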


\begin{proof}
    For $R \in \set{G}$ and $\vec{x}, \vec{g} \in \R^d$, we observe
    \begin{equation*}
        (R \vec{x}) \conv \vec{g}
        = \shift_R (\vec{x} \conv \vec{g}).
    \end{equation*}
    Indeed, for $Q \in \set{G}$,
    \begin{equation*}
        [(R \vec{x}) \conv \vec{g}]_Q
        = \inp{R \vec{x}}{Q \vec{g}}
        = \inp{\vec{x}}{R^\top Q \vec{g}}
        = [\vec{x} \conv \vec{g}]_{R^\top Q}
        = [\shift_R (\vec{x} \conv \vec{g})]_Q.
    \end{equation*}
    Let $k \in \range{K}$. Using the above and due to the element-wise application of $\tau_k$ and $\sigma$,
    \begin{equation*}
        [\Phi(Q \vec{x})]_k
        = \sigma((Q \vec{x}) \conv \vec{g}_k + \tau_k)
        = \sigma(\shift_Q(\vec{x} \conv \vec{g}_k) + \tau_k)
        = \shift_Q \sigma(\vec{x} \conv \vec{g}_k + \tau_k)
        = \shift_Q [\Phi(\vec{x})]_k,
    \end{equation*}
    which proves the first statement. For the second statement, note that
    \begin{equation*}
        [\Phi(\vec{x})]_{(k, Q)}
        = \sigma(\inp{\vec{x}}{Q \vec{g}_k} + \tau_k)
        = \sigma(\inp{Q^\top \vec{x}}{\vec{g}_k} + \tau_k)
        = [\Psi(Q^\top \vec{x})]_k.
    \end{equation*}
\end{proof}

In the case of the random fully-connected layer, \citet{dirksen2024} prove the following. To state the result, for $\set{A}, \set{B} \subset \R^d$, we denote
\begin{equation*}
    \dist_2(\set{A}, \set{B})
    \coloneqq \inf_{\vec{a} \in \set{A}, \vec{b} \in \set{B}} \norm{\vec{a} - \vec{b}}_2.
\end{equation*}

\begin{lemma}[{\citealt[Lemma~17]{dirksen2024}}]\label{lem:fully-connected-layer-lower-bound}
     Let $\set{X} = \set{X}^+ \uplus \set{X}^- \subset \radius \B_2^d$ be finite with $N^+ \coloneqq \card{\set{X}^+}$, $N^- \coloneqq \card{\set{X}^-}$. Suppose $\dist_2(\set{X}^+, \set{X}^-) \geq \delta$ and assume $\lambda \gtrsim \radius$ and
    \begin{equation*}
        K \gtrsim \delta^{-1} \lambda \log(2 N^+ N^- / \eta).
    \end{equation*}
    Then, with probability at least $1 - \eta$, for all $\vec{x}^+ \in \set{X}^+$ and $\vec{x}^- \in \set{X}^-$,
    \begin{equation*}
        \inp{\Psi(\vec{x}^+)}{\vec{v}_{\vec{x}^+}} = 0
        \qquad\text{and}\qquad
        \inp{\Psi(\vec{x}^-)}{\vec{v}_{\vec{x}^+}} \gtrsim \norm{\vec{x}^- - \vec{x}^+}_2 \lambda^{-1} K,
    \end{equation*}
    where $\vec{v}_{\vec{x}^+} = \ind{\Psi(\vec{x}^+) = \vec{0}}$.
\end{lemma}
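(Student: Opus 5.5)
The statement is Lemma~17 of \citet{dirksen2024}. A self-contained proof rests on the observation that $\inp{\Psi(\vec{x}^-)}{\vec{v}_{\vec{x}^+}}$ counts the indices $k$ at which $\vec{x}^+$ does not fire but $\vec{x}^-$ does. That is, it counts the $k$ with $\inp{\vec{g}_k}{\vec{x}^+} + \tau_k \leq 0 < \inp{\vec{g}_k}{\vec{x}^-} + \tau_k$. The first identity is immediate: by definition $\vec{v}_{\vec{x}^+}$ is supported exactly where $\Psi(\vec{x}^+)$ vanishes. So everything reduces to a lower bound on a sum of $K$ i.i.d.\ Bernoulli variables
\begin{equation*}
    B_k = \ind{-\inp{\vec{g}_k}{\vec{x}^-} < \tau_k \leq -\inp{\vec{g}_k}{\vec{x}^+}},
\end{equation*}
one family for each fixed pair $(\vec{x}^+, \vec{x}^-)$, followed by a union bound over the $N^+ N^-$ pairs.

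\textbf{Step 1 (success probability).} Fix a pair and condition on $\vec{g}_k$. Because $\tau_k \sim \Unif([-\lambda,\lambda])$, the probability of $B_k = 1$ equals $(2\lambda)^{-1}$ times the length of the interval $(-\inp{\vec{g}_k}{\vec{x}^-}, -\inp{\vec{g}_k}{\vec{x}^+}] \cap [-\lambda,\lambda]$. This length is $\inp{\vec{g}_k}{\vec{x}^- - \vec{x}^+}_+$ whenever both endpoints lie in $[-\lambda,\lambda]$.
\begin{itemize}
    \item The endpoint condition holds when $|\inp{\vec{g}_k}{\vec{x}^\pm}| \leq \lambda$. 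Since $\inp{\vec{g}_k}{\vec{x}^\pm} \sim \Normal(0,\norm{\vec{x}^\pm}_2^2)$ with $\norm{\vec{x}^\pm}_2 \leq \radius \lesssim \lambda$, this event has probability close to $1$ for a suitable constant in $\lambda \gtrsim \radius$.
    \item Meanwhile $\inp{\vec{g}_k}{\vec{x}^- - \vec{x}^+}$ is Gaussian with standard deviation $\norm{\vec{x}^- - \vec{x}^+}_2$. Its positive part therefore has mean $\norm{\vec{x}^- - \vec{x}^+}_2/\sqrt{2\pi}$, and it exceeds $c\norm{\vec{x}^- - \vec{x}^+}_2$ with constant probability.
\end{itemize}
Intersecting the two events, again using that the bad event is small once the constant in $\lambda \gtrsim \radius$ is large, gives $p \coloneqq \Pr(B_k = 1) \geq c' \norm{\vec{x}^- - \vec{x}^+}_2/\lambda$.

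\textbf{Step 2 (concentration and union bound).} The $B_k$ are independent, so the multiplicative Chernoff bound gives
\begin{equation*}
    \Pr\Bigl(\textstyle\sum_k B_k < pK/2\Bigr) \leq \exp(-pK/8).
\end{equation*}
Since $p \gtrsim \norm{\vec{x}^- - \vec{x}^+}_2/\lambda \geq \delta/\lambda$, the assumption $K \gtrsim \delta^{-1}\lambda \log(2N^+N^-/\eta)$ makes this at most $\eta/(N^+N^-)$. A union bound over all pairs then yields, with probability at least $1-\eta$, the bound $\inp{\Psi(\vec{x}^-)}{\vec{v}_{\vec{x}^+}} \geq pK/2 \gtrsim \norm{\vec{x}^- - \vec{x}^+}_2 \lambda^{-1} K$ simultaneously for all pairs.

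\textbf{Main obstacle.} The only delicate point is Step 1. There we must lower-bound the truncated interval length uniformly in the pair, which requires controlling the joint event that the Gaussian increment is large while both projections stay inside $[-\lambda,\lambda]$. I would handle it by subtracting probabilities,
\begin{equation*}
    \E\bigl[\inp{\vec{g}}{\vec{x}^- - \vec{x}^+}_+ \, \ind{\text{both in range}}\bigr] \geq \E\bigl[\inp{\vec{g}}{\vec{x}^- - \vec{x}^+}_+\bigr] - \E\bigl[\inp{\vec{g}}{\vec{x}^- - \vec{x}^+}_+ \, \ind{\text{out of range}}\bigr],
\end{equation*}
and bounding the last term by Cauchy--Schwarz together with a Gaussian tail bound of order $e^{-c\lambda^2/\radius^2}$. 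This keeps the second term below half the first once $\lambda \geq C\radius$ for a large enough constant $C$.
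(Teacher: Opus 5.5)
Your proof is correct and follows essentially the same route as the source. This paper does not reprove the lemma; it imports it from \citet{dirksen2024}, whose argument has the same structure as yours: count the neurons that are off at $\vec{x}^+$ and on at $\vec{x}^-$, lower-bound the per-neuron probability of this event by a constant times $\norm{\vec{x}^- - \vec{x}^+}_2/\lambda$ (this is where $\lambda \gtrsim \radius$ is used), and finish with a Chernoff bound and a union bound over the $N^+N^-$ pairs, with your Cauchy--Schwarz truncation estimate being a clean self-contained way to obtain the per-neuron bound.
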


We generalize the above result to the random convolution layer.

\begin{lemma}\label{lem:convolution-layer-lower-bound}
    Let $\set{G} \leq \set{O}(d)$ be finite with $G \coloneqq \card{\set{G}}$ and let $\set{X} = \set{X}^+ \uplus \set{X}^- \subset \radius \B_2^d$ be finite with $N^+ \coloneqq \card{\set{X}^+}$, $N^- \coloneqq \card{\set{X}^-}$. Suppose $\dist(\set{X}^+, \set{X}^-) \geq \delta$ and assume $\lambda \gtrsim \radius$ and
    \begin{equation*}
        K \gtrsim \delta^{-1} \lambda \log(2 N^+ N^- G / \eta).
    \end{equation*}
    Then, with probability at least $1 - \eta$, for all $\vec{x}^+ \in \set{X}^+$ and $\vec{x}^- \in \set{X}^-$,
    \begin{equation*}
        \dinp{\Phi(\vec{x}^+)}{\vec{u}_{\vec{x}^+}} = 0
        \qquad\text{and}\qquad
        \dinp{\Phi(\vec{x}^-)}{\vec{u}_{\vec{x}^+}} \gtrsim \dist(\vec{x}^-, \vec{x}^+) \lambda^{-1} K G,
    \end{equation*}
    where $\vec{u}_{\vec{x}^+} = \ind{\Phi(\vec{x}^+) = \vec{0}}$.
\end{lemma}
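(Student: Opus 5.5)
The plan is to reduce the statement to the fully-connected case, Lemma~\ref{lem:fully-connected-layer-lower-bound}, applied once to a $\set{G}$-augmented data set. The reduction rests on the second identity of Lemma~\ref{lem:convolution-layer-properties}: each "slice" of $\Phi$ is just $\Psi$ evaluated at a transformed input.

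\textbf{Step 1: the first claim.} Note that $\Phi(\vec{x}^+)$ and $\vec{u}_{\vec{x}^+}$ both have nonnegative entries, so $\inp{\Phi(\vec{x}^+)}{\lift{\shift}_R\vec{u}_{\vec{x}^+}}\ge 0$ for every $R$. For $R=\mat{I}$ this inner product is exactly $0$, because $\vec{u}_{\vec{x}^+}$ is supported on the zeros of $\Phi(\vec{x}^+)$. Taking the minimum over $R$ gives $\dinp{\Phi(\vec{x}^+)}{\vec{u}_{\vec{x}^+}}=0$ deterministically.

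\textbf{Step 2: rewrite the shifted inner product.} Fix $R\in\set{G}$. Substituting $Q=RQ'$ in the sum over $Q$ gives
\begin{equation*}
    \inp{\Phi(\vec{x}^-)}{\lift{\shift}_R\vec{u}_{\vec{x}^+}}
    = \sum_{Q'\in\set{G}} \inp{[\Phi(\vec{x}^-)]_{(:,RQ')}}{[\vec{u}_{\vec{x}^+}]_{(:,Q')}} .
\end{equation*}
By Lemma~\ref{lem:convolution-layer-properties} we have $[\Phi(\vec{x}^-)]_{(:,RQ')}=\Psi(Q'^\top R^\top\vec{x}^-)$. Likewise $[\vec{u}_{\vec{x}^+}]_{(:,Q')}=\ind{\Psi(Q'^\top\vec{x}^+)=\vec{0}}=\vec{v}_{Q'^\top\vec{x}^+}$, in the notation of Lemma~\ref{lem:fully-connected-layer-lower-bound}. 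Hence
\begin{equation*}
    \inp{\Phi(\vec{x}^-)}{\lift{\shift}_R\vec{u}_{\vec{x}^+}}
    = \sum_{Q'\in\set{G}} \inp{\Psi(Q'^\top R^\top \vec{x}^-)}{\vec{v}_{Q'^\top\vec{x}^+}} .
\end{equation*}

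\textbf{Step 3: apply Lemma~\ref{lem:fully-connected-layer-lower-bound} to the augmented sets.} Define $\set{X}^\pm_{\set{G}}\coloneqq\{Q\vec{x}:Q\in\set{G},\ \vec{x}\in\set{X}^\pm\}$. Since the elements of $\set{G}$ are orthogonal, these sets still lie in $\radius\B_2^d$. Moreover, $\norm{P\vec{x}^- - Q\vec{x}^+}_2=\norm{\vec{x}^- - P^\top Q\vec{x}^+}_2\ge\dist(\vec{x}^-,\vec{x}^+)\ge\delta>0$, so the two augmented sets are disjoint and satisfy $\dist_2(\set{X}^+_\set{G},\set{X}^-_\set{G})\ge\delta$. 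Their cardinalities are at most $GN^+$ and $GN^-$. Because $\log(2N^+N^-G^2/\eta)\le 2\log(2N^+N^-G/\eta)$, our assumption on $K$ (with a larger absolute constant) implies the hypothesis of Lemma~\ref{lem:fully-connected-layer-lower-bound}. On its event of probability at least $1-\eta$, every summand in Step~2 satisfies
\begin{equation*}
    \inp{\Psi(Q'^\top R^\top \vec{x}^-)}{\vec{v}_{Q'^\top\vec{x}^+}}
    \gtrsim \norm{Q'^\top R^\top\vec{x}^- - Q'^\top\vec{x}^+}_2\lambda^{-1}K
    = \norm{R^\top\vec{x}^- - \vec{x}^+}_2\lambda^{-1}K
    \ge \dist(\vec{x}^-,\vec{x}^+)\lambda^{-1}K .
\end{equation*}
Summing the $G$ terms over $Q'$ and then minimizing over $R$ yields the second claim.

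The only point needing care is the bookkeeping in Step~2: the index substitution and the identification of the slices of $\vec{u}_{\vec{x}^+}$ with the vectors $\vec{v}$ of the fully-connected lemma. I expect this to be the main place where an error could slip in. The probability bookkeeping is easy, because a single application of Lemma~\ref{lem:fully-connected-layer-lower-bound} to the augmented sets covers all pairs $(\vec{x}^+,\vec{x}^-)$ and all $R,Q'$ simultaneously.
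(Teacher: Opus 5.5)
Your proof is correct and follows essentially the same route as the paper: the first claim comes from the identity shift and nonnegativity, and the second from decomposing over slices via Lemma~\ref{lem:convolution-layer-properties} and invoking Lemma~\ref{lem:fully-connected-layer-lower-bound}. The differences are only bookkeeping. You apply the fully-connected lemma once to the $\set{G}$-augmented sets, which gives the same $\log(2N^+N^-G^2/\eta)$ term, whereas the paper applies it $G^2$ times and takes a union bound. You also reindex the sum directly, where the paper uses the identity $\lift{\shift}_R \vec{u}_{\vec{x}^+} = \vec{u}_{R\vec{x}^+}$.
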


\begin{proof}
    Let $E \in \set{G}$ denote the identity element. Then, for all $\vec{x}^+ \in \set{X}^+$, by definition,
    \begin{equation*}
        0
        \leq \dinp{\Phi(\vec{x}^+)}{\vec{u}_{\vec{x}^+}}
        = \min_{R \in \set{G}} \inp{\Phi(\vec{x}^+)}{\lift{\shift}_R \vec{u}_{\vec{x}^+}}
        \leq \inp{\Phi(\vec{x}^+)}{\lift{\shift}_E \vec{u}_{\vec{x}^+}}
        = \inp{\Phi(\vec{x}^+)}{\vec{u}_{\vec{x}^+}}
        = 0.
    \end{equation*}
    This proves the first claim. For the second claim, consider the event $\event{A}$ where, for all $\vec{x}^+ \in \set{X}^+$, $\vec{x}^- \in \set{X}^-$ and $R, Q \in \set{G}$,
    \begin{equation*}
        \inp{\Psi(Q^\top \vec{x}^-)}{\vec{v}_{Q^\top R \vec{x}^+}}
        \gtrsim \norm{Q^\top \vec{x}^- - Q^\top R \vec{x}^+}_2 \lambda^{-1} K
        = \norm{\vec{x}^- - R \vec{x}^+}_2 \lambda^{-1} K.
    \end{equation*}
    By Lemma~\ref{lem:fully-connected-layer-lower-bound} applied to the sets $Q^\top \set{X}^-$ and $Q^\top R \set{X}^+$, followed by a union bound over $Q, R \in \set{G}$, we obtain $\Pr(\event{A}) \geq 1 - \eta$ under our assumptions.

    On $\event{A}$, for all $\vec{x}^+ \in \set{X}^+, \vec{x}^- \in \set{X}^-$ and $R\in \mathcal{G}$,
    \begin{align*}
        \inp{\Phi(\vec{x}^-)}{\vec{u}_{R \vec{x}^+}}
        = {} & \sum_{Q \in \set{G}} \inp{[\Phi(\vec{x}^-)]_{(:, Q)}}{[\vec{u}_{R\vec{x}^+}]_{(:, Q)}}\\
        = {} & \sum_{Q \in \set{G}} \inp{\Psi(Q^\top \vec{x}^-)}{\vec{v}_{Q^\top R \vec{x}^+}}
        \gtrsim \norm{\vec{x}^- - R \vec{x}^+}_2 \lambda^{-1} K G,
    \end{align*}
    where we used Lemma~\ref{lem:convolution-layer-properties} and, in particular,
    \begin{equation*}
        [\vec{u}_{R \vec{x}^+}]_{(:, Q)}
        = \ind{[\Phi(R \vec{x}^+)]_{(:, Q)}
        = \vec{0}} = \ind{\Psi(Q^\top R \vec{x}^+)
        = \vec{0}} = \vec{v}_{Q^\top R \vec{x}^+}.
    \end{equation*}
    This immediately implies, for all $\vec{x}^+ \in \set{X}^+$ and $\vec{x}^- \in \set{X}^-$,
    \begin{align*}
        \dinp{\Phi(\vec{x}^-)}{\vec{u}_{\vec{x}^+}}
        = {} &\min_{R \in \set{G}} \inp{\Phi(\vec{x}^-)}{\lift{\shift}_R \vec{u}_{\vec{x}^+}}\\
        = {} &\min_{R \in \set{G}} \inp{\Phi(\vec{x}^-)}{\vec{u}_{R \vec{x}^+}}
        \gtrsim \dist(\vec{x}^-, \vec{x}^+) \lambda^{-1} K G,
    \end{align*}
    where, again, we used Lemma~\ref{lem:convolution-layer-properties} and, in particular,
    \begin{equation*}
        \lift{\shift}_R \vec{u}_{\vec{x}^+}
        = \ind{\lift{\shift}_R \Phi(\vec{x}^+) = \vec{0}}
        = \ind{\Phi(R \vec{x}^+) = \vec{0}}
        = \vec{u}_{R \vec{x}^+}.
    \end{equation*}
\end{proof}

The following observation again considers a random fully-connected layer and was stated by \citet[Lemma~18]{dirksen2024} as a direct consequence of Theorem~2.9 of \citet{dirksen2021}.\footnote{The formulation in \citet{dirksen2024} uses the Gaussian complexity, which we replaced by the Gaussian width. This is possible since, in their version, the components are centered and thus contain the origin. We may then use, \eg, Exercise~7.6.9 of \citet{vershynin2018high}.}

\begin{lemma}[{\citealt[Lemma~18]{dirksen2024}}]\label{lem:fully-connected-layer-upper-bound}
    Let $\vec{c}_1,\ldots,\vec{c}_M\subset \R^d$, $s_1, \ldots, s_M \geq 0$ and
    \begin{equation*}
        r_m \lesssim \frac{s_m}{\sqrt{\log(e \lambda / s_m)}},
        \qquad s = \min_{m \in \range{M}} s_m.
    \end{equation*}
    Consider $\set{X}_1, \ldots, \set{X}_M \subset \R^d$ such that, for all $m \in \range{M}$,
    \begin{equation*}
        \set{X}_m \subset \B_2^d(\vec{c}_m, r_m) \subset \radius\B_2^d.
    \end{equation*}
    Suppose $\lambda \gtrsim \radius$ and
    \begin{align*}
        K \gtrsim s^{-1} \lambda\log (2 M / \eta) + \lambda\max_{m \in [M]} s_m^{-3} w^2(\set{X}_m).
    \end{align*}
    Then, with probability at least $1 - \eta$, for all $m \in [M]$ and $\vec{x} \in \set{X}_m$,
    \begin{equation*}
        \norm{\Psi(\vec{c}_m) - \Psi(\vec{x})}_1
        \lesssim s_m \lambda^{-1} K.
    \end{equation*}
\end{lemma}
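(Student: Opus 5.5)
We would prove the bound separately for each fixed $m$ and then take a union bound over $m\in\range{M}$, which accounts for the factor $\log(2M/\eta)$. For fixed $m$, write $\vec{c}=\vec{c}_m$, $r=r_m$, $s_m$, $\set{T}=\set{X}_m$. Since $\Psi$ takes values in $\{0,1\}^K$,
\begin{equation*}
    \norm{\Psi(\vec{c})-\Psi(\vec{x})}_1=\sum_{k\in\range{K}}\ind{\sign(\inp{\vec{g}_k}{\vec{c}}+\tau_k)\neq\sign(\inp{\vec{g}_k}{\vec{x}}+\tau_k)},
\end{equation*}
which counts the random dithered hyperplanes $H_k=\{\vec{z}:\inp{\vec{g}_k}{\vec{z}}+\tau_k=0\}$ that separate $\vec{c}$ from $\vec{x}$. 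The goal is a \emph{uniform} bound over $\vec{x}\in\set{T}$. This is a local version of the uniform tessellation theorem, and the plan follows the proof of Theorem~2.9 of \citet{dirksen2021}.

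First, for a single point, condition on $\vec{g}_k$. Because $\lambda\gtrsim\radius$ and $\set{T}\subset\radius\B_2^d$, with high probability the values $\inp{\vec{g}_k}{\vec{c}}$ and $\inp{\vec{g}_k}{\vec{x}}$ lie in $[-\lambda,\lambda]$ (up to a truncation argument on $\norm{\vec{g}_k}$-type events). In that case the uniform dither gives separation probability $|\inp{\vec{g}_k}{\vec{x}-\vec{c}}|/(2\lambda)$. Hence the expected count is $\lesssim K\norm{\vec{x}-\vec{c}}_2/\lambda\le Kr/\lambda\le Ks_m/\lambda$. Bernstein's inequality then gives concentration at level $s_m\lambda^{-1}K$ with failure probability $\exp(-c\,s_mK/\lambda)$; this is where $K\gtrsim s^{-1}\lambda\log(2M/\eta)$ enters.

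Second, to pass to all $\vec{x}\in\set{T}$, we reduce the separation event to a statement that does not depend on $\vec{x}$ individually. If $H_k$ separates $\vec{c}$ and $\vec{x}$, then
\begin{equation*}
    |\inp{\vec{g}_k}{\vec{c}}+\tau_k|\le|\inp{\vec{g}_k}{\vec{x}-\vec{c}}|.
\end{equation*}
Split the indices $k$ according to whether $|\inp{\vec{g}_k}{\vec{x}-\vec{c}}|\le t$ or $>t$, with threshold $t\simeq s_m$.

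For the first group, the count is at most $\#\{k:|\inp{\vec{g}_k}{\vec{c}}+\tau_k|\le t\}$. This depends only on $\vec{c}$ and concentrates at $\lesssim tK/\lambda$.

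For the second group, we must bound $\sup_{\vec{x}\in\set{T}}\#\{k:|\inp{\vec{g}_k}{\vec{x}-\vec{c}}|>t\}$ uniformly. This is a sup of an empirical process indexed by $\set{T}-\vec{c}\subset r\B_2^d$. Each $\inp{\vec{g}_k}{\vec{x}-\vec{c}}$ is Gaussian with standard deviation at most $r$, so a single index exceeds $t$ with probability $\exp(-t^2/r^2)$. The radius condition $r\lesssim s_m/\sqrt{\log(e\lambda/s_m)}$ is chosen exactly so that this probability is at most $s_m/\lambda$, keeping the expected count $\lesssim s_mK/\lambda$.

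The uniformity over $\set{T}$ is the main obstacle. We would handle it with a bound on the $k$-th largest coordinate of $(\sup_{\vec{x}}\inp{\vec{g}_k}{\vec{x}-\vec{c}})_k$, or equivalently by controlling $\sup_{\vec{x}\in\set{T}}\sum_k\ind{|\inp{\vec{g}_k}{\vec{x}-\vec{c}}|>t}$. The tool is the standard estimate
\begin{equation*}
    \sup_{|I|\le\ell}\sup_{\vec{x}}\Big(\sum_{k\in I}\inp{\vec{g}_k}{\vec{x}-\vec{c}}^2\Big)^{1/2}\lesssim \gw(\set{T})+r\sqrt{\ell\log(eK/\ell)},
\end{equation*}
which follows from Gaussian concentration and a union bound over subsets $I$. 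Take $\ell\simeq s_mK/\lambda$. Then at most $\ell$ indices can exceed $t$, provided
\begin{equation*}
    \gw(\set{T})^2+r^2\ell\log(eK/\ell)\lesssim \ell t^2 .
\end{equation*}
With $t\simeq s_m$, the first term requires $\gw^2(\set{T})\lesssim s_m^3K/\lambda$, which is the assumption $K\gtrsim\lambda s_m^{-3}\gw^2(\set{X}_m)$. The second term is handled by the logarithmic restriction on $r$.

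Combining the two groups yields $\norm{\Psi(\vec{c}_m)-\Psi(\vec{x})}_1\lesssim s_m\lambda^{-1}K$ uniformly on $\set{X}_m$. The union bound over $m$ finishes the proof. If centering is needed for the width, the footnote's remark allows replacing the Gaussian complexity by $\gw$.
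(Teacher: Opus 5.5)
Your proof is correct, and it follows the same route as the result the paper relies on, but it proves that result rather than citing it. The paper gives no argument of its own here. It imports the lemma from \citet[Lemma~18]{dirksen2024}, which is read off from the local uniform tessellation bound (Theorem~2.9 of \citet{dirksen2021}) plus a union bound over $m$, with a footnote trading Gaussian complexity for Gaussian width.

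What you have written is a self-contained version of the argument behind that theorem, specialized to comparisons with one point. The separating indices lie in $\{k : |\inp{\vec{g}_k}{\vec{c}}+\tau_k|\le t\}\cup\{k : |\inp{\vec{g}_k}{\vec{x}-\vec{c}}|>t\}$. The first set is handled by the dither density and does not depend on $\vec{x}$, so uniformity is free. The second set is handled by the $\ell$-largest-coordinates estimate. With $t\simeq s_m$ and $\ell\simeq s_mK/\lambda$, both hypotheses on $K$ and the radius condition enter exactly where you say. Because each $\vec{x}\in\set{X}_m$ is compared only with $\vec{c}_m$, no net over $\set{X}_m$ and no covering-number term is needed.

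Your route also clarifies two small points.
\begin{itemize}
\item The fixed-$I$ expectation in your subset estimate comes from Chevet's inequality, or directly from Sudakov--Fernique against $r\inp{\vec{h}}{\vec{u}}+\inp{\vec{g}}{\vec{x}-\vec{c}}$. This gives $\gw(\set{X}_m-\vec{c}_m)=\gw(\set{X}_m)$ directly, so your closing hedge about centering is unnecessary. In particular you never need $\vec{c}_m\in\set{X}_m$. The paper's footnote tacitly uses that condition, and it can fail for the sets $R^\top\set{X}_{m,Q}$ to which the lemma is applied later.
\item The hypothesis $\lambda\gtrsim\radius$ plays no role in this upper bound; it matters only for the lower bound in Lemma~\ref{lem:fully-connected-layer-lower-bound}. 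Conditionally on $\vec{g}_k$, the separation probability is at most $|\inp{\vec{g}_k}{\vec{x}-\vec{c}}|/(2\lambda)$ with no truncation. Your opening single-point paragraph (truncation, Bernstein) can be dropped entirely. The two-group split already gives the uniform bound, with failure probabilities $\exp(-c\,s_mK/\lambda)$ and $\exp(-c\,\ell\log(eK/\ell))$, both controlled by $K\gtrsim s^{-1}\lambda\log(2M/\eta)$ for the union bound over $m$.
\end{itemize}

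The citation route buys brevity and the full generality of the cited theorem. Your route makes the lemma self-contained and shows precisely which hypotheses it actually uses.
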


We derive an analogue for the random convolution layer.

\begin{lemma}\label{lem:convolution-layer-upper-bound}
    Let $\set{G} \leq \set{O}(d)$ be finite with $G \coloneqq \card{\set{G}}$. Let $\vec{c}_1, \ldots, \vec{c}_M \in \R^d$, $s_1, \ldots, s_M \geq 0$ and
    \begin{equation*}
        r_m \lesssim \frac{s_m}{\sqrt{\log(e \lambda/s_m)}},
        \qquad s = \min_{m \in \range{M}} s_m.
    \end{equation*}
    Consider $\set{X}_1, \ldots, \set{X}_M \subset \R^d$ such that, for all $m \in \range{M}$,
    \begin{equation}
    \label{eqnXmBallCond}
        \set{X}_m \subset \{ \vec{x} \in \R^d : \dist(\vec{c}_m, \vec{x}) \leq r_m \} \subset \radius\B_2^d.
    \end{equation}
    Suppose $\lambda \gtrsim \radius$ and, for $\set{X}_{m, Q} \coloneqq \set{X}_m \cap \B_2^d(Q \vec{c}_m, r_m)$,
    \begin{align*}
        K \gtrsim s^{-1} \lambda\log (2 M G / \eta) + \lambda\max_{m \in [M], Q \in \set{G}} s_m^{-3} w^2(\set{X}_{m, Q}).
    \end{align*}
    Then, with probability at least $1 - \eta$, for all $m \in [M]$ and $\vec{x} \in \set{X}_m$,
    \begin{equation*}
        \min_{Q \in \set{G}} \norm{\lift{\shift}_Q \Phi(\vec{c}_m) - \Phi(\vec{x})}_1
        \lesssim s_m \lambda^{-1} K G.
    \end{equation*}
\end{lemma}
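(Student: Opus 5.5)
The plan is to reduce Lemma~\ref{lem:convolution-layer-upper-bound} to the fully-connected statement of Lemma~\ref{lem:fully-connected-layer-upper-bound}. The reduction uses the second identity of Lemma~\ref{lem:convolution-layer-properties} to write the convolution output column by column as $\Psi$ applied to rotated inputs, and then takes a union bound over $\set{G}$. Fix $m$ and $\vec{x} \in \set{X}_m$. By \eqref{eqnXmBallCond} there is some $R \in \set{G}$ with $\norm{\vec{x} - R\vec{c}_m}_2 \le r_m$, so $\vec{x} \in \set{X}_{m,R}$. We choose $Q = R$ in the minimum. By the first identity of Lemma~\ref{lem:convolution-layer-properties}, $\lift{\shift}_R \Phi(\vec{c}_m) = \Phi(R\vec{c}_m)$. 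Hence the quantity to control is $\norm{\Phi(R\vec{c}_m) - \Phi(\vec{x})}_1$.

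Splitting the $\ell_1$-norm over the columns indexed by $P \in \set{G}$ and applying the second identity gives
\begin{equation*}
    \norm{\Phi(R\vec{c}_m) - \Phi(\vec{x})}_1 = \sum_{P \in \set{G}} \norm{\Psi(P^\top R \vec{c}_m) - \Psi(P^\top \vec{x})}_1 .
\end{equation*}
It therefore suffices to show that, with probability at least $1-\eta$, each summand is $\lesssim s_m \lambda^{-1} K$ simultaneously for all $m$, all $R, P \in \set{G}$ and all $\vec{x} \in \set{X}_{m,R}$. Summing the $G$ terms then yields the claimed $s_m \lambda^{-1} K G$.

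To obtain this uniform event, apply Lemma~\ref{lem:fully-connected-layer-upper-bound} to the family of $MG^2$ sets $P^\top \set{X}_{m,R}$. These have centers $P^\top R \vec{c}_m$, radii $r_m$ and parameters $s_m$, indexed by $(m,R,P)$. The hypotheses transfer because $P^\top$ is orthogonal. First, $P^\top\set{X}_{m,R} \subset \B_2^d(P^\top R\vec{c}_m, r_m)$. Second, this ball lies in $\radius\B_2^d$: the set in \eqref{eqnXmBallCond} is $\set{G}$-invariant and contains $\B_2^d(R\vec{c}_m, r_m)$, so its image under $P^\top$ is still inside $\radius \B_2^d$. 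Third, Gaussian width is invariant under orthogonal maps, since $P\vec{g} \sim \Normal(\vec{0},\mat{I}_d)$, so $w(P^\top \set{X}_{m,R}) = w(\set{X}_{m,R})$. The minimal $s$ is unchanged.

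The required number of filters is then $K \gtrsim s^{-1}\lambda \log(2MG^2/\eta) + \lambda \max_{m,R} s_m^{-3} w^2(\set{X}_{m,R})$. Since $\log(2MG^2/\eta) \le 2\log(2MG/\eta)$, this is implied by the assumption up to constants. Alternatively, one can apply the lemma once to the $MG$ sets indexed by $(m, P^\top R)$: the map $(R,P) \mapsto P^\top R$ only matters through its value, although the sets $\set{X}_{m,R}$ differ for different $R$, so the $MG^2$ union bound is the safer choice.

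The step I expect to need the most care is this bookkeeping of the union bound and of the ball-containment hypothesis $\B_2^d(\vec{c}_m, r_m)\subset \radius\B_2^d$ for the rotated centers. Beyond that there is no real obstacle. All randomness sits in the single shared pair $(\mat{G}, \vec{\tau})$, so applying the fully-connected lemma once to the enlarged family automatically gives simultaneous control of every column.
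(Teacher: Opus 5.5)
Your proposal is correct and is essentially the paper's proof. The paper also applies Lemma~\ref{lem:fully-connected-layer-upper-bound} to the $MG^2$ rotated sets $R^\top\set{X}_{m,Q}\subset\B_2^d(R^\top Q\vec{c}_m,r_m)$, using rotation invariance of the Gaussian width. It then splits $\norm{\Phi(Q\vec{c}_m)-\Phi(\vec{x})}_1$ over columns via Lemma~\ref{lem:convolution-layer-properties} and picks $Q^*$ with $\vec{x}\in\set{X}_{m,Q^*}$. Your explicit checks of the ball containment and of $\log(2MG^2/\eta)\le 2\log(2MG/\eta)$ fill in details the paper leaves implicit. The aside about an $MG$-set union bound is not justified as stated, but you rightly do not rely on it.
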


\begin{proof}
    Let $\event{A}$ denote the event where, for all $m \in \range{M}$, $Q, R \in \set{G}$ and $\vec{x} \in \set{X}_{m, Q}$,
    \begin{equation*}
        \norm{\Psi(R^\top Q \vec{c}_m) - \Psi(R^\top \vec{x})}_1
        \lesssim s_m \lambda^{-1} K.
    \end{equation*}
    By Lemma~\ref{lem:fully-connected-layer-upper-bound}, applied to the sets $R^\top \set{X}_{m, Q} \subset \B_2^d(R^\top Q \vec{c}_m, r_m) \subset \radius \B_2^d$ for $m \in \range{M}$ and $R, Q \in \set{G}$, $\Pr(\event{A}) \geq 1 - \eta$ under our assumptions. Here, we used that $\gw(R^\top \set{X}_{m, Q})=\gw(\set{X}_{m, Q})$ by rotation invariance of the Gaussian width.

    Let $m \in \range{M}$. On $\event{A}$, for $Q \in \set{G}$ and $\vec{x} \in \set{X}_{m, Q}$,
    \begin{align*}
        \norm{\Phi(Q \vec{c}_m) - \Phi(\vec{x})}_1
        & = \sum_{R \in \set{G}} \norm{[\Phi(Q \vec{c}_m)]_{(:, R)} - [\Phi(\vec{x})]_{(:, R)}}_1\\
        & = \sum_{R \in \set{G}} \norm{\Psi(R^\top Q \vec{c}_m) - \Psi(R^\top \vec{x})}_1
        \lesssim s_m \lambda^{-1} K G,
    \end{align*}
    where we used Lemma~\ref{lem:convolution-layer-properties}. By \eqref{eqnXmBallCond}, for $\vec{x} \in \set{X}_m$ there exists $Q^* \in \set{G}$ with $\vec{x} \in \set{X}_{m, Q^*}$ and thus
    \begin{align*}
        \min_{Q \in \set{G}} \norm{\lift{\shift}_Q \Phi(\vec{c}_m) - \Phi(\vec{x})}_1
        & \leq \norm{\lift{\shift}_{Q^*} \Phi(\vec{c}_m) - \Phi(\vec{x})}_1\\
        & = \norm{\Phi(Q^* \vec{c}_m) - \Phi(\vec{x})}_1
        \lesssim s_m \lambda^{-1} K G.
    \end{align*}
\end{proof}

The following lemma is the key ingredient for the proof of Theorem~\ref{thm:size-of-group-invariant-network}. It states that under our assumptions, with high probability, a neuron
\begin{equation*}
    \phihat_{\vec{x}_*^+}(\vec{z}) = \sigma(m_{\vec{x}_*^+} - \dinp{\vec{z}}{\vec{u}_{\vec{x}_*^+}}),
    \quad\text{where}\quad
    m_{\vec{x}_*^+} = \min_{\vec{x}^- \in \set{X}^-} \dinp{\Phi(\vec{x}^-)}{\vec{u}_{\vec{x}_*^+}},
\end{equation*}
associated to a sample $\vec{x}_*^+ \in \set{X}^+$, activates on the (image of the) whole component that this sample resides in. At the same time, it does not activate on (the image of) any sample of the other class. This lemma is the analogue to Lemma~19 from \citet{dirksen2024}.

\begin{lemma}\label{lem:convolution-neuron-activation-region}
   Under the assumptions of Theorem~\ref{thm:size-of-group-invariant-network}, with probability at least $1 - \eta$, for any $k \in \range{M^+}$ and $\vec{x}_*^+ \in \set{X}_k^+$,
    \begin{equation*}
        \phihat_{\vec{x}_*^+}(\Phi(\vec{x}^+)) > 0 \quad \text{for all $ \vec{x}^+ \in \set{X}_k^+$}
        \quad\text{and}\quad
        \phihat_{\vec{x}_*^+}(\Phi(\vec{x}^-)) = 0 \quad \text{for all $ \vec{x}^- \in \set{X}^-$}.
    \end{equation*}
\end{lemma}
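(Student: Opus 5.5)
The plan is to combine Lemma~\ref{lem:convolution-layer-lower-bound}, applied to the centers, with Lemma~\ref{lem:convolution-layer-upper-bound}, applied to the components, and then compare the two bounds via the triangle inequality for the kernel $\dinp{\cdot}{\cdot}$. First I apply Lemma~\ref{lem:convolution-layer-lower-bound} to the finite sets $\set{C}^+$ and $\set{C}^-$, which satisfy $\dist(\set{C}^+,\set{C}^-)\geq\delta$. Since $|\set{C}^\pm| = M^\pm$, this costs $K\gtrsim \lambda\delta^{-1}\log(2M^+M^-G/\eta)$. On this event, for every $k$ and $\ell$,
\[
\dinp{\Phi(\vec{c}_\ell^-)}{\vec{u}_{\vec{c}_k^+}} \geq c_1\, \dist(\vec{c}_\ell^-,\vec{c}_k^+)\lambda^{-1}KG.
\]
Second, I apply Lemma~\ref{lem:convolution-layer-upper-bound} twice. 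Once to the components $\set{X}_k^+$ with $s_k = \dist(\vec{c}_k^+,\set{C}^-)$, and once to $\set{X}_\ell^-$ with $s_\ell = \dist(\vec{c}_\ell^-,\set{C}^+)$. The hypotheses match the radius condition, the definition of $\omega$, and $s\geq\delta$. On the intersection of the three events (with $\eta$ split into thirds, which only changes constants), we get, for each $\vec{x}\in\set{X}_k^+$, some $Q$ with $\norm{\lift{\shift}_Q\Phi(\vec{c}_k^+)-\Phi(\vec{x})}_1\leq c_2 s_k\lambda^{-1}KG$. The analogous statement holds on the negative side.

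Next I need elementary properties of the kernel. Since $\vec{u}\in\{0,1\}^{K\times\set{G}}$ and the lifted shifts are permutations, $|\inp{\vec{z}}{\lift{\shift}_R\vec{u}}-\inp{\vec{z}'}{\lift{\shift}_R\vec{u}}|\leq\norm{\vec{z}-\vec{z}'}_1$. Taking minima over $R$ gives that $\vec{z}\mapsto\dinp{\vec{z}}{\vec{u}}$ is $1$-Lipschitz in $\ell_1$. Shift invariance in the first argument also holds: $\dinp{\lift{\shift}_Q\vec{z}}{\vec{u}}=\dinp{\vec{z}}{\vec{u}}$, by reindexing the minimum using $\lift{\shift}_Q^\top=\lift{\shift}_{Q^\top}$. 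Similarly $\dinp{\vec{z}}{\lift{\shift}_Q\vec{u}}=\dinp{\vec{z}}{\vec{u}}$, and $\vec{u}_{Q\vec{x}}=\lift{\shift}_Q\vec{u}_{\vec{x}}$. The main obstacle is that the neuron uses $\vec{u}_{\vec{x}_*^+}$ rather than $\vec{u}_{\vec{c}_k^+}$, and $\vec{u}$ is not Lipschitz in its argument. I would handle this with $|\dinp{\vec{z}}{\vec{u}}-\dinp{\vec{z}}{\vec{u}'}|\leq\max_R|\inp{\vec{z}}{\lift{\shift}_R(\vec{u}-\vec{u}')}|\leq\norm{\vec{u}-\vec{u}'}_1$ for $\vec{z}\in\{0,1\}^{K\times\set{G}}$. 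Combined with $\norm{\vec{u}_{\vec{x}}-\vec{u}_{\vec{y}}}_1=\norm{\Phi(\vec{x})-\Phi(\vec{y})}_1$ for binary vectors, this gives, for $\vec{x}_*^+\in\set{X}_k^+$ with its matching $Q_*$,
\[
\bigl|\dinp{\vec{z}}{\vec{u}_{\vec{x}_*^+}}-\dinp{\vec{z}}{\vec{u}_{\vec{c}_k^+}}\bigr|\leq\norm{\Phi(Q_*\vec{c}_k^+)-\Phi(\vec{x}_*^+)}_1\leq c_2 s_k\lambda^{-1}KG.
\]

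Now I conclude. For $\vec{x}^-\in\set{X}_\ell^-$, I move from $\vec{x}^-$ to $\vec{c}_\ell^-$, costing $c_2 s_\ell\lambda^{-1}KG$ by Lipschitz continuity and shift invariance. I then swap $\vec{u}_{\vec{x}_*^+}$ for $\vec{u}_{\vec{c}_k^+}$, costing $c_2s_k\lambda^{-1}KG$. This gives $\dinp{\Phi(\vec{x}^-)}{\vec{u}_{\vec{x}_*^+}}\geq (c_1\dist(\vec{c}_\ell^-,\vec{c}_k^+)-c_2 s_k-c_2s_\ell)\lambda^{-1}KG$. Since $s_k,s_\ell\leq\dist(\vec{c}_\ell^-,\vec{c}_k^+)$, and the implicit constants in the radius condition (and hence $c_2$) can be taken small relative to $c_1$, this yields $m_{\vec{x}_*^+}\geq c_3 s_k\lambda^{-1}KG$. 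Here the minimum defining $m_{\vec{x}_*^+}$ ranges over $\ell$, and $\dist(\vec{c}_\ell^-,\vec{c}_k^+)\geq s_k$. It follows immediately that $\phihat_{\vec{x}_*^+}(\Phi(\vec{x}^-))=\sigma(m_{\vec{x}_*^+}-\dinp{\Phi(\vec{x}^-)}{\vec{u}_{\vec{x}_*^+}})=0$, since the argument is nonpositive by definition of the minimum.

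For $\vec{x}^+\in\set{X}_k^+$, I compare with $\vec{x}_*^+$, for which the kernel value is $0$ by Lemma~\ref{lem:convolution-layer-lower-bound}'s first claim (which holds deterministically). Passing through $\vec{c}_k^+$ with two Lipschitz steps gives $\dinp{\Phi(\vec{x}^+)}{\vec{u}_{\vec{x}_*^+}}\leq 2c_2s_k\lambda^{-1}KG<m_{\vec{x}_*^+}$, provided $2c_2<c_3$. Hence the neuron activates on $\vec{x}^+$. Getting these constants consistent is the only delicate bookkeeping.
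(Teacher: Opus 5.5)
Your proposal is correct and is essentially the paper's proof. Like the paper, you apply Lemma~\ref{lem:convolution-layer-lower-bound} to the centers and Lemma~\ref{lem:convolution-layer-upper-bound} to the components, and then compare kernel values in $\ell_1$ via H\"older, using $\vec{u}_{\vec{x}} = \vec{1} - \Phi(\vec{x})$ to trade $\vec{u}_{\vec{x}_*^+}$ for a shift of $\vec{u}_{\vec{c}_k^+}$; the paper does this inline rather than through separate Lipschitz facts. The only difference is constant bookkeeping: the constant $C_2$ in the conclusion of Lemma~\ref{lem:convolution-layer-upper-bound} is fixed, so the paper instead applies that lemma with rescaled $s_k^+ = \frac{c_1}{12C_2}\dist(\vec{c}_k^+,\set{C}^-)$ and $s_\ell^- = \frac{c_1}{4C_2}\dist(\vec{c}_\ell^-,\set{C}^+)$, absorbing the rescaling into the implicit constants of the radius and $K$ conditions, which is the precise form of your remark about making $c_2$ small.
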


\begin{proof}
    Clearly, the choice of $m_{\vec{x}_*^+}$ ensures the second property. For the first property, it remains to show that, with probability at least $1 - \eta$, for any $k \in \range{M^+}$ and all $\vec{x}_*^+, \vec{x}^+ \in \set{X}_k^+$,
    \begin{equation*}
        \dinp{\Phi(\vec{x}^+)}{\vec{u}_{\vec{x}_*^+}}
        < m_{\vec{x}_*^+}
        = \min_{\vec{x}^- \in \set{X}^-} \dinp{\Phi(\vec{x}^-)}{\vec{u}_{\vec{x}_*^+}}.
    \end{equation*}

    Let $\event{A}$ denote the event where, for $k \in \range{M^+}$ and $\ell \in \range{M^-}$,
    \begin{equation*}
        \dinp{\Phi(\vec{c}_\ell^-)}{\vec{u}_{\vec{c}_k^+}}
        \geq c_1 \cdot \dist(\vec{c}_\ell^-, \vec{c}_k^+) \lambda^{-1} K G.
    \end{equation*}
    By Lemma~\ref{lem:convolution-layer-lower-bound} applied to $\set{C}^+$ and $\set{C}^-$, $\Pr(\event{A}) \geq 1 - \eta / 2$ under our assumptions.
    
    Let $\event{B}$ denote the event where, for $k \in \range{M^+}$ and $\vec{x}^+ \in \set{X}_k^+$,
    \begin{equation*}
        \min_{Q \in \set{G}} \norm{\Phi(\vec{x}^+) - \lift{\shift}_Q \Phi(\vec{c}_k^+)}_1
        \leq C_2 \cdot s_k^+ \lambda^{-1} K G
    \end{equation*}
    and, for $\ell \in \range{M^-}$ and $\vec{x}^- \in \set{X}_\ell^-$,
    \begin{equation*}
        \min_{Q \in \set{G}} \norm{\Phi(\vec{x}^-) - \lift{\shift}_Q \Phi(\vec{c}_\ell^-)}_1
        \leq C_2 \cdot s_\ell^- \lambda^{-1} K G,
    \end{equation*}
    where
    \begin{equation*}
        s_k^+ \coloneqq \frac{c_1}{12 C_2} \dist(\vec{c}_k^+, \set{C}^-)
        \qquad\text{and}\qquad
        s_\ell^- \coloneqq \frac{c_1}{4 C_2} \dist(\vec{c}_\ell^-, \set{C}^+).
    \end{equation*}
    By Lemma~\ref{lem:convolution-layer-upper-bound} and a union bound, $\Pr(\event{B}) \geq 1 - \eta / 2$ under our assumptions. Hence, $\Pr(\event{A} \cap \event{B}) \geq 1 - \eta$ and for the remainder of the proof we condition on this event.

    Let $k \in \range{M^+}$ and $\vec{x}_*^+, \vec{x}^+ \in \set{X}_k^+$. For $Q, R \in \set{G}$,
    \begin{align*}
        \dinp{\Phi(\vec{x}^+)}{\vec{u}_{\vec{x}_*^+}}
        \leq {} & \inp{\Phi(\vec{x}^+)}{\lift{\shift}_Q \vec{u}_{\vec{x}_*^+}}\\
        = {} & \inp{\Phi(\vec{x}^+) - \lift{\shift}_Q \Phi(\vec{x}_*^+)}{\lift{\shift}_Q \vec{u}_{\vec{x}_*^+}}\\
        \leq {} & \norm{\Phi(\vec{x}^+) - \lift{\shift}_Q \Phi(\vec{x}_*^+)}_1\\
        \leq {} & \norm{\Phi(\vec{x}^+) - \lift{\shift}_R \Phi(\vec{c}_k^+)}_1 + \norm{\lift{\shift}_R \Phi(\vec{c}_k^+) - \lift{\shift}_Q \Phi(\vec{x}_*^+)}_1\\
        = {} & \norm{\Phi(\vec{x}^+) - \lift{\shift}_R \Phi(\vec{c}_k^+)}_1 + \norm{\lift{\shift}_{Q^\top R} \Phi(\vec{c}_k^+) - \Phi(\vec{x}_*^+)}_1,
    \end{align*}
    where we used $\inp{\lift{\shift}_Q \Phi(\vec{x}_*^+)}{\lift{\shift}_Q \vec{u}_{\vec{x}_*^+}} = \inp{\Phi(\vec{x}_*^+)}{\vec{u}_{\vec{x}_*^+}} = 0$, Hölder's inequality and \eqref{eqn:elemShiftProp}. Minimizing successively over $Q$ and $R$ yields
    \begin{equation*}
        \dinp{\Phi(\vec{x}^+)}{\vec{u}_{\vec{x}_*^+}}
        \leq \min_{Q \in \set{G}} \norm{\Phi(\vec{x}^+) - \lift{\shift}_Q \Phi(\vec{c}_k^+)}_1 + \min_{Q \in \set{G}} \norm{\Phi(\vec{x}_*^+) - \lift{\shift}_Q \Phi(\vec{c}_k^+)}_1.
    \end{equation*}
    Using $\event{B}$, we find the upper bound
    \begin{equation}\label{eqn:mainLemmaUpBd}
        \dinp{\Phi(\vec{x}^+)}{\vec{u}_{\vec{x}_*^+}}
        \leq 2 C_2 \cdot s_k^+ \lambda^{-1} K G.
    \end{equation}
    Let $\vec{x}^- \in \set{X}^-$ and pick $\ell \in \range{M^-}$ such that $\vec{x}^- \in \set{X}_\ell^-$. For $Q, R, U \in \set{G}$,
    \begin{align*}
        & \inp{\Phi(\vec{x}^-)}{\lift{\shift}_Q \vec{u}_{\vec{x}_*^+}}\\ 
        & \qquad = \inp{\lift{\shift}_U \Phi(\vec{c}_\ell^-)}{\lift{\shift}_{Q R} \vec{u}_{\vec{c}_k^+}} + \inp{\lift{\shift}_Q \vec{u}_{\vec{x}_*^+} - \lift{\shift}_{Q R} \vec{u}_{\vec{c}_k^+}}{\lift{\shift}_U \Phi(\vec{c}_\ell^-)} + \inp{\lift{\shift}_Q \vec{u}_{\vec{x}_*^+}}{\Phi(\vec{x}^-) - \lift{\shift}_U \Phi(\vec{c}_\ell^-)}\\
        & \qquad \geq \inp{\Phi(\vec{c}_\ell^-)}{\lift{\shift}_{U^\top Q R} \vec{u}_{\vec{c}_k^+}} - \norm{\lift{\shift}_Q \Phi(\vec{x}_*^+) - \lift{\shift}_{Q R} \Phi(\vec{c}_k^+)}_1 - \norm{\Phi(\vec{x}^-) - \lift{\shift}_U \Phi(\vec{c}_\ell^-)}_1\\
        & \qquad \geq \dinp{\Phi(\vec{c}_\ell^-)}{\vec{u}_{\vec{c}_k^+}} - \norm{\Phi(\vec{x}_*^+) - \lift{\shift}_R \Phi(\vec{c}_k^+)}_1 - \norm{\Phi(\vec{x}^-) - \lift{\shift}_U \Phi(\vec{c}_\ell^-)}_1,
    \end{align*}
    where, again, we used Hölder's inequality and the properties \eqref{eqn:elemShiftProp} of (lifted) circular shift operators. Minimizing over $Q$ and maximizing over $R$ and $U$ yields
    \begin{equation*}
        \dinp{\Phi(\vec{x}^-)}{\vec{u}_{\vec{x}_*^+}}
        \geq \dinp{\Phi(\vec{c}_\ell^-)}{\vec{u}_{\vec{c}_k^+}} - \min_{Q \in \set{G}} \norm{\Phi(\vec{x}_*^+) - \lift{\shift}_Q \Phi(\vec{c}_k^+)}_1 - \min_{Q \in \set{G}} \norm{\Phi(\vec{x}^-) - \lift{\shift}_Q \Phi(\vec{c}_\ell^-)}_1.
    \end{equation*}
    Using both $\event{A}$ and $\event{B}$, we find the lower bound
    \begin{equation}\label{eqn:mainLemmaLowBd}
        \dinp{\Phi(\vec{x}^-)}{\vec{u}_{\vec{x}_*^+}}
        \geq c_1 \cdot \dist(\vec{c}_\ell^-, \vec{c}_k^+) \lambda^{-1} K G - C_2 \cdot s_k^+ \lambda^{-1} K G - C_2 \cdot s_\ell^- \lambda^{-1} K G.
    \end{equation}
    Combining the upper and lower bounds \eqref{eqn:mainLemmaUpBd} and \eqref{eqn:mainLemmaLowBd}, we find
    \begin{equation*}
        \dinp{\Phi(\vec{x}^+)}{\vec{u}_{\vec{x}_*^+}} < \dinp{\Phi(\vec{x}^-)}{\vec{u}_{\vec{x}_*^+}}
    \end{equation*}
    for all $k \in \range{M^+}$, $\vec{x}_*^+, \vec{x}^+ \in \set{X}_k^+$ and $\vec{x}^- \in \set{X}^-$, where we used that
    \begin{equation*}
        s_k^+ < \frac{c_1}{6 C_2} \cdot \dist(\vec{c}_k^+, \set{C}^-)
        \quad\text{and}\quad
        s_\ell^- < \frac{c_1}{2 C_2} \cdot \dist(\vec{c}_\ell^-, \set{C}^+).
    \end{equation*}
\end{proof}

\begin{proof}[Theorem~\ref{thm:size-of-group-invariant-network}]
    We use the notation from Algorithm~\ref{alg:group-invariant-memorization}. In the following we condition on the event of Lemma~\ref{lem:convolution-neuron-activation-region}, which occurs with probability at least $1 - \eta$ under the stated assumptions. 
    
    \paragraph{Termination and number of neurons.} The first loop constructs the random convolution layer
    \begin{equation*}
        \Phi(\vec{x}) = [\sigma(\vec{x} \conv \vec{g}_k + \tau_k)]_{k \in \range{K}}.
    \end{equation*}
    
    For the second loop we claim that, after the $i$-th iteration, $\set{A}^+ = \{ \vec{x}_1^+, \ldots, \vec{x}_i^+ \}$ and
    \begin{equation*}
        \set{C}^+ = \set{X}^+ \setminus (\set{V}_{\vec{x}_1^+} \cup \cdots \cup \set{V}_{\vec{x}_i^+}) \subset \set{X}^+ \setminus (\set{X}_{k_1}^+ \cup \cdots \cup \set{X}_{k_i}^+),
    \end{equation*}
    where each $\vec{x}_j^+$ is part of some component $\set{X}_{k_j}^+$ for pairwise distinct $k_1, \ldots, k_j \in \range{M^+}$.
    
    For $i = 0$, we have $\set{A}^+ = \emptyset$ and $\set{C}^+ = \set{X}^+$, and hence the claim is trivially satisfied. Suppose that the claim holds for the $i$-th iteration. In the $(i + 1)$-th iteration, the algorithm selects a random candidate sample $\vec{x}_{i+1}^+ \in \set{C}^+$ which is part of at least one new component $\set{X}_{k_{i+1}}$ for some $k_{i+1} \in \range{M^+} \setminus \{ k_1, \ldots, k_i \}$. By Lemma~\ref{lem:convolution-neuron-activation-region}, we have $\emptyset \neq \set{X}_{k_{i+1}}^+ \subset \set{V}_{\vec{x}_{i+1}^+}$ and the condition of the if-statement in line~\ref{line:ifVnonempty} is true. Hence, $\set{A}^+ = \{ \vec{x}_1^+, \ldots, \vec{x}_{i+1}^+ \}$ and
    \begin{equation*}
        \set{C}^+ = \set{X}^+ \setminus (\set{V}_{\vec{x}_1^+} \cup \cdots \cup \set{V}_{\vec{x}_{i+1}^+}) \subset \set{X}^+ \setminus (\set{X}_{k_1}^+ \cup \cdots \cup \set{X}_{k_{i+1}}^+),
    \end{equation*}
    and the claim follows by induction. In particular, after at most $M^+$ steps, the loop terminates due to $\set{C}^+ = \emptyset$ and $\set{A}^+$ contains no more than $M^+$ samples. Moreover, $\set{X}^+ = \bigcup_{\vec{x}_*^+ \in \set{A}^+} \set{V}_{\vec{x}_*^+}$.

    Finally, we examine the last loop. As long as 
    \begin{equation}\label{eqn:pruneCond}
        \Bigg|\bigcup_{\vec{x}_*^+ \in \set{A}^+} \set{V}_{\vec{x}_*^+}\Bigg| \geq \alpha N^+,
    \end{equation}
    we remove samples from $\set{A}^+$. The selection rule in line~\ref{line:pruneSelection} is not important here. If removing a sample leads to a violation of this condition, the loop terminates and we add the last removed sample back to $\set{A}^+$ in line~\ref{line:correctExcessPruning} so that the condition \eqref{eqn:pruneCond} is satisfied again.

    \paragraph{Interpolation.} By Lemma~\ref{lem:convolution-neuron-activation-region}, for all $\vec{x}^- \in \set{X}^-$, we have $\hat{\Phi}(\Phi(\vec{x}^-)) = \vec{0}$ and hence $F(\vec{x}^-) = -1$. On the other hand, consider $\vec{x}^+ \in \bigcup_{\vec{x}_*^+ \in \set{A}^+} \set{V}_{\vec{x}_*^+}$, which includes at least a fraction of $\alpha$ of the samples from $\set{X}^+$ as we recall from above. Then, there exists $\vec{x}_*^+ \in \set{A}^+$ with $\vec{x}^+ \in \set{V}_{\vec{x}_*^+}$, and hence $[\hat{\Phi}(\Phi(\vec{x}^+))]_{\vec{x}_*^+} > 0$. Consequently, $F(\vec{x}^+) = +1$.

    \paragraph{Invariance.} Let $Q \in \set{G}$. Observe, for $\vec{z}, \vec{u} \in \R^{K \times \set{G}}$,
    \begin{equation*}
        \dinp{\lift{\shift}_Q \vec{z}}{\vec{u}}
        = \min_{R \in \set{G}} \inp{\lift{\shift}_Q \vec{z}}{\lift{\shift}_R \vec{u}}
        = \min_{R \in \set{G}} \inp{\vec{z}}{\lift{\shift}_{Q^\top R} \vec{u}}
        = \min_{R \in \set{G}} \inp{\vec{z}}{\lift{\shift}_{R} \vec{u}}
        = \dinp{\vec{z}}{\vec{u}},
    \end{equation*}
    and hence $\hat{\Phi}(\lift{\shift}_Q \vec{z}) = \hat{\Phi}(\vec{z})$. Combining this with Lemma~\ref{lem:convolution-layer-properties}, we find for any $\vec{x} \in \R^d$ that
    \begin{equation*}
        F(Q \vec{x})
        = \sign(\inp{\vec{1}}{\hat{\Phi}(\Phi(Q \vec{x}))})
        = \sign(\inp{\vec{1}}{\hat{\Phi}(\lift{\shift}_Q \Phi(\vec{x}))})
        = \sign(\inp{\vec{1}}{\hat{\Phi}(\Phi(\vec{x}))})
        = F(\vec{x}).
    \end{equation*}

    \paragraph{Number of parameters.} The first layer $\Phi$ has weights $\vec{g}_k \in \R^d$ and biases $\tau_k \in \R$ for $k \in \range{K}$, which is $(d + 1) K$ parameters. The second layer $\hat{\Phi}$ has weights $\vec{u}_{\vec{x}_*^+} \in \R^{K \times \set{G}}$ and biases $m_{\vec{x}_*^+} \in \R$ for $\vec{x}_*^+ \in \set{A}^+$ with $\card{\set{A}^+} \leq M^+$, which is at most $(K G + 1) M^+$ parameters. The last, linear layer, has weights $\vec{1} \in \R^{\set{A}^+}$, which is at most $M^+$ parameters. Hence, the total is no more than
    \begin{equation*}
        (d + 1) K + (K G + 1) M^+ + M^+
        \lesssim (d + M^+ G) K.
    \end{equation*}
\end{proof}

\section{Multi-class interpolation algorithm}
\label{sec:multiclass-algorithm}

In Appendix~\ref{sec:group-invariant-memorization} we introduced Algorithm~\ref{alg:group-invariant-memorization} in the context of a binary classification problem. Its approach can be easily generalized to multiple classes, see Algorithm~\ref{alg:group-invariant-memorization-multiclass}. In this section, we outline how this generalization works but omit a full proof as it would be very similar to the proof of Theorem~\ref{thm:size-of-group-invariant-network}. The approach is similar to the one of \citet{dirksen2024}.

A common approach to extend a binary classification method to multiple classes is the one-vs-rest method, where for each class a binary classifier is trained to distinguish it from all other classes. At inference time, the class label corresponding to the classifier with the highest confidence score is returned. We extend Algorithm~\ref{alg:group-invariant-memorization} in a similar, yet not exactly the same manner. In the first loop (lines~\ref{alg-line:group-invariant-memorization-multiclass-begin-first-layer} to~\ref{alg-line:group-invariant-memorization-multiclass-end-first-layer}) we sample weights and biases of a random convolution layer which will be \emph{reused} for each one-vs-rest subproblem. In the second loop (lines~\ref{alg-line:group-invariant-memorization-begin-one-vs-rest} to~\ref{alg-line:group-invariant-memorization-end-one-vs-rest}) we solve each of the one-vs-rest subproblems, utilizing a procedure (lines~\ref{alg-line:group-invarian-memorization-multiclass-begin-procedure} to~\ref{alg-line:group-invarian-memorization-multiclass-end-procedure}) which is the same as the second loop in Algorithm~\ref{alg:group-invariant-memorization}. Already at this point, one can use a simple union bound to prove a high success probability for the algorithm. What remains is the modified pruning loop (lines~\ref{alg-line:group-invariant-memorization-begin-pruning} to~\ref{alg-line:group-invariant-memorization-end-pruning}) which works similarly to before. However, instead of selecting the neuron with the least contribution for one class alone, we consider neurons over all classes. That at least a fraction of $\alpha$ of the samples of $\set{X}$ are interpolated after this step can be derived analogously to the corresponding part of the proof of Theorem~\ref{thm:size-of-group-invariant-network}. In the end, we return the network (line~\ref{alg-line:group-invariant-memorization-return}), which reuses the first random layer and stacks the second layer for each of the one-vs-rest subproblems.

\begin{algorithm}
\caption{$\set{G}$-invariant multi-class memorization}\label{alg:group-invariant-memorization-multiclass}
\begin{algorithmic}[1]
    \Require{$\set{G} \leq \set{O}(d)$ finite, $\set{X} = \set{X}^{(1)} \uplus \cdots \uplus \set{X}^{(L)} \subset \R^d$ finite with $N \coloneqq \card{\set{X}}$, number of random filters $K \in \N$, maximal bias $\lambda \geq 0$, interpolation fraction $\alpha \in (0, 1]$}
    \Ensure{$\set{G}$-invariant (generalized) convolutional neural net $F\colon \R^d \to \{0, 1\}^L$}
    \For{$k \in \range{K}$}\label{alg-line:group-invariant-memorization-multiclass-begin-first-layer}
        \State $\vec{g}_k \gets \text{independent sample from } \Normal(\vec{0}, \mat{I}_d)$
        \State $\mathrlap{\tau_k}\hphantom{\vec{g}_k} \gets \text{independent sample from } \Unif([-\lambda, \lambda])$
    \EndFor \Comment{$\vec{g} = (\vec{g}_1, \ldots, \vec{g}_K), \vec{\tau} = (\tau_1, \ldots, \tau_K)$} \label{alg-line:group-invariant-memorization-multiclass-end-first-layer}\vspace{1ex}
    \For{$\ell \in \range{L}$} \label{alg-line:group-invariant-memorization-begin-one-vs-rest}
        \State $\set{A}^{(\ell)} \gets \Call{Interpolate}{\set{X}^{(\ell)}, \set{X} \setminus \set{X}^{(\ell)}, \vec{g}, \vec{\tau}}$
    \EndFor \Comment{$\set{A} = \set{A}^{(1)} \cup \cdots \cup \set{A}^{(L)}$} \label{alg-line:group-invariant-memorization-end-one-vs-rest} \vspace{1ex}
    \While{$\card{\bigcup_{\vec{x}_* \in \set{A}} \set{V}_{\vec{x}_*}} \geq \alpha N$} \label{alg-line:group-invariant-memorization-begin-pruning}
        \State $\ell, \vec{x}_* \gets \argmin_{\ell \in \range{L}, \vec{x}_* \in \set{A}^{(\ell)}} \card{ \set{V}_{\vec{x}_*} \setminus \bigcup_{\tilde{\vec{x}}_* \in \set{A} \setminus \{\vec{x}_*\}} \set{V}_{\tilde{\vec{x}}_*}}$
        \State $\set{A}^{(\ell)} \gets \set{A}^{(\ell)} \setminus \{ \vec{x}_* \}$
    \EndWhile
    \State $\set{A}^{(\ell)} \gets \set{A}^{(\ell)} \cup \{ \vec{x}_* \}$ \label{alg-line:group-invariant-memorization-end-pruning} \Comment{$\hat{\Phi}^{(\ell)}(\vec{z}) = [\sigma(m_{\vec{x}_*} - \dinp{\vec{z}}{\vec{u}_{\vec{x}_*}})]_{\vec{x}_* \in \set{A}^{(\ell)}}$} \vspace{1ex}
    \State \Return $F(\vec{x}) = (F^{(1)}(\vec{x}), \ldots, F^{(L)}(\vec{x}))$ \Comment{$F^{(\ell)}(\vec{x}) = \sigma(\inp{\vec{1}}{\hat{\Phi}^{(\ell)}(\Phi(\vec{x}))})$} \label{alg-line:group-invariant-memorization-return} \vspace{2ex}
    \Procedure{Interpolate}{$\set{X}^+$, $\set{X}^-$, $\vec{g}$, $\vec{\tau}$}  \label{alg-line:group-invarian-memorization-multiclass-begin-procedure}
        \State $\set{A}^+ \gets \emptyset$ \Comment{$\Phi(\vec{x}) = [\sigma(\vec{x} \conv \vec{g}_k + \tau_k)]_{k \in \range{K}}$}
        \State $\mathrlap{\set{C}^+}\hphantom{\set{A}^+} \gets \set{X}^+$
        \While{$\card{\set{C}^+} > 0$}
            \State $\vec{x}_*^+ \gets \text{random sample from } \set{C}^+$
            \State $\mathrlap{\set{C}^+}\hphantom{\vec{x}_*^+} \gets \set{C}^+ \setminus \{ \vec{x}_*^+ \}$
            \State $\mathrlap{\vec{u}_{\vec{x}_*^+}}\hphantom{m_{\vec{x}_*^+}} \gets \ind{\Phi(\vec{x}_*^+) = \vec{0}}$
            \State $m_{\vec{x}_*^+} \gets \min_{\vec{x}^- \in \set{X}^-} \dinp{\Phi(\vec{x}^-)}{\vec{u}_{\vec{x}_*^+}}$ \Comment{$\phihat_{\vec{x}_*^+}(\vec{z}) = \sigma(m_{\vec{x}_*^+} - \dinp{\vec{z}}{\vec{u}_{\vec{x}_*^+}})$}
            \State $\mathrlap{\set{V}_{\vec{x}_*^+}}\hphantom{m_{\vec{x}_*^+}} \gets \{ \vec{x}^+ \in \set{X}^+ : \phihat_{\vec{x}_*^+}(\Phi(\vec{x}^+)) > 0 \}$
            \If{$\card{\set{V}_{\vec{x}_*^+}} > 0$}
                \State $\set{A}^+ \gets \set{A}^+ \cup \{ \vec{x}_*^+ \}$
                \State $\mathrlap{\set{C}^+}\hphantom{\set{A}^+} \gets \set{C}^+ \setminus \set{V}_{\vec{x}_*^+}$
            \EndIf
        \EndWhile\vspace{1ex}
        \State \Return $\set{A}^+$
    \EndProcedure \label{alg-line:group-invarian-memorization-multiclass-end-procedure}
\end{algorithmic}
\end{algorithm}

\newpage

\bibliography{ref}

\end{document}